\def\eqref#1{Eq-(\ref{#1})}
\newcommand{\nn}{\nonumber}
\newtheorem{remark}{Remark}%[section]
\newtheorem{theorem}{Theorem}%[section]
\newtheorem{corollary}{Corollary}%[section]
\def\1{\bm{1}}
\newcommand{\st}{{\rm s.t.}}
\DeclareMathAlphabet{\mathcal}{OMS}{cmsy}{m}{n}
\def\vzero{{\bm{0}}}
\def\vone{{\bm{1}}}
\def\va{{\bm{a}}}
\def\vu{{\bm{u}}}
\def\vv{{\bm{v}}}
\def\vw{{\bm{w}}}
\def\vx{{\bm{x}}}
\def\vz{{\bm{z}}}
\def\mA{{\bm{A}}}
\def\mB{{\bm{B}}}
\def\mF{{\bm{F}}}
\def\mI{{\bm{I}}}
\def\mJ{{\bm{J}}}
\def\mM{{\bm{M}}}
\def\mP{{\bm{P}}}
\def\mR{{\bm{R}}}
\def\mT{{\bm{T}}}
\def\mU{{\bm{U}}}
\def\mV{{\bm{V}}}
\def\mW{{\bm{W}}}
\def\mLambda{{\bm{\Lambda}}}
\def\mSigma{{\bm{\Sigma}}}
\DeclareMathAlphabet{\mathsfit}{\encodingdefault}{\sfdefault}{m}{sl}
\SetMathAlphabet{\mathsfit}{bold}{\encodingdefault}{\sfdefault}{bx}{n}
\def\gA{{\mathcal{A}}}
\def\gD{{\mathcal{D}}}
\def\gJ{{\mathcal{J}}}
\def\gN{{\mathcal{N}}}
\def\gO{{\mathcal{O}}}
\def\gS{{\mathcal{S}}}
\def\gX{{\mathcal{X}}}
\newcommand{\E}{\mathbb{E}}
\newcommand{\R}{\mathbb{R}}
\newcommand{\KL}{D_{\mathrm{KL}}}
\newcommand{\Norm}[1]{\|#1\|}
\newcommand{\Abs}[1]{|#1|}
\DeclareMathOperator{\Tr}{Tr}
\title{Towards A Unified PAC-Bayesian Framework for Norm-based Generalization Bounds}
\author{Xinping Yi, Gaojie Jin, Xiaowei Huang, and Shi Jin 
\thanks{X. Yi and S. Jin are with the School of Information Science and Engineering, Southeast University, Nanjing, China. Email: \texttt{\{xyi,jinshi\}@seu.edu.cn}.}
\thanks{G. Jin is with the Department of Computer Science, University of Exeter, Exeter, United Kingdom. Email: \texttt{gaojie.jin.kim@gmail.com}.}
\thanks{X. Huang is with the Department of Computer Science, University of Liverpool, Liverpool, United Kingdom. Email: \texttt{xiaowei.huang@
liverpool.ac.uk}.}
}
\begin{document}

\maketitle

\begin{abstract}
Understanding the generalization behavior of deep neural networks remains a fundamental challenge in modern statistical learning theory. Among existing approaches, PAC-Bayesian norm-based bounds have demonstrated particular promise due to their data-dependent nature and their ability to capture algorithmic and geometric properties of learned models. However, most existing results rely on isotropic Gaussian posteriors, heavy use of spectral-norm concentration for weight perturbations, and largely architecture-agnostic analyses, which together limit both the tightness and practical relevance of the resulting bounds. To address these limitations, in this work, we propose a unified framework for PAC-Bayesian norm-based generalization by reformulating the derivation of generalization bounds as a stochastic optimization problem over anisotropic Gaussian posteriors. The key to our approach is a sensitivity matrix that quantifies the network outputs with respect to structured weight perturbations, enabling the explicit incorporation of heterogeneous parameter sensitivities and architectural structures. By imposing different structural assumptions on this sensitivity matrix, we derive a family of generalization bounds that recover several existing PAC-Bayesian results as special cases, while yielding bounds that are comparable to or tighter than state-of-the-art approaches. Such a unified framework provides a principled and flexible way for geometry-/structure-aware and interpretable generalization analysis in deep learning.
\end{abstract}

% \begin{term}
%     generalization
% \end{term}

\section{Introduction}
Lying at the heart of statistical learning theory, generalization remains a central challenge in understanding the empirical success of deep learning models, whose expressive power with over-parameterization often defies classical intuition. 
A generalization error bound quantitatively characterizes generalization performance by measuring, with high probability, the discrepancy between a model’s empirical performance on the training data and its expected performance on unseen test data drawn from the underlying data distribution.
Over the past decades, a rich collection of theoretical frameworks has been developed to derive the generalization error bounds, including VC dimension \cite{blumer1989learnability,vapnik2015uniform}, Rademacher complexity \cite{bartlett2002rademacher}, algorithmic stability \cite{bousquet2002stability,hardt2016train}, information-theoretic \cite{russo2016controlling,xu2017information} and compression-based \cite{arora2018stronger} approaches, as well as PAC-Bayesian methods \cite{mcallester1998some,mcallester1999pac,catoni2007pac}. 
Among these frameworks, PAC-Bayesian theory has emerged as one of the most promising and versatile tools for analyzing deep learning \cite{dziugaite2017computing,neyshabur2018pac,jin2020does,jin2022weight}, owing to its ability to yield non-vacuous, data-dependent bounds that naturally incorporate randomized predictors, prior knowledge, and algorithmic biases. 

The PAC-Bayesian framework combines probably approximately correct (PAC) learning and Bayesian statistics. The former focuses on providing theoretical guarantees about the generalization performance of a learning algorithm based on the amount of observed (cf. training) data, while the latter provides reasoning about uncertainty by incorporating prior beliefs and their posteriors updated with observed data using Bayes' theorem.
With respect to the PAC-Bayesian generalization, upper bounds on the expected generalization error of a learning algorithm are usually given as the sum of the empirical error on the training data and a complexity measure over the hypothesis space of the learning algorithm. These bounds are typically in the form of high probability, meaning that with high probability (e.g., at least $1-\delta$ with a small $\delta$), the true generalization error of the learned model will not exceed a certain threshold.
The complexity measure is usually expressed in terms of KL divergence, which measures the divergence between the distribution induced by the learned model (cf. the posterior) and a predetermined distribution irrelevant to models (cf. the prior). 
PAC-Bayesian theory has found applications in a vast range of machine learning tasks, including classification \cite{mcallester1998some,mcallester1999pac,mcallester2003simplified,langford2002pac}, regression \cite{alquier2016properties,germain2016pac,shalaeva2020improved,guo2025pacbayes}, and many others (see \cite{alquier2024user,guedj2019primer} and references therein). It provides a principled way to reason about generalization and uncertainty in learning algorithms, attracting increasingly more attention, especially to deep neural networks.

\subsection{Related Works}
PAC-Bayesian theory originates from the seminal works of McAllester \cite{mcallester1998some,mcallester1999pac} and Catoni \cite{catoni2007pac}, which established high-probability generalization bounds for randomized predictors in terms of the empirical risk and the Kullback-Leibler (KL) divergence between posterior and prior distributions over hypotheses. 
Over the past decades, there have been numerous follow-up works, e.g., \cite{langford2002pac,seeger2002pac,maurer2004note} to name a few, advancing this line of research. Subsequent developments extended the framework to data-dependent priors \cite{parrado2012pac,dziugaite2018data}, norm-based losses \cite{mcallester2003simplified,neyshabur2018pac}, and the adversarial settings \cite{viallard2021pac,xiao2023pac}, as well as those built close connections between PAC-Bayes, Bayesian inference, and information-theoretic generalization \cite{hellstrom2025generalization}. These advances positioned PAC-Bayes as a flexible framework capable of producing non-vacuous, algorithm-aware bounds even in highly over-parameterized regimes. 
A major line of research applies PAC-Bayesian theory to deep neural networks by carefully choosing priors and posteriors that lead to bounds expressed in terms of parameter norms \cite{neyshabur2015norm,neyshabur2018pac,dziugaite2017computing,bartlett2017spectrally}. By considering Gaussian or scale-invariant priors, these works derive generalization bounds depending on the Frobenius norm, path norm, or layerwise norms of network weights, often combined with Lipschitz properties of activations. Notably, PAC-Bayesian analyses have shown that flat minima and small perturbation sensitivity correspond to smaller KL terms, providing a theoretical explanation for empirical observations in deep learning. These norm-based PAC-Bayesian bounds bridge classical capacity control and modern optimization phenomena, yielding non-vacuous estimates of generalization for practical networks. 
More recent work refines norm-based PAC-Bayesian bounds by incorporating spectral norms of weight matrices, which directly control the Lipschitz constant of deep networks and lead to sharper, architecture-aware generalization guarantees \cite{neyshabur2018pac,zhou2018non,lotfi2022pac}. By combining PAC-Bayesian analysis with spectral normalization and matrix concentration inequalities, these results yield bounds depending on products of spectral norms and sums of Frobenius norms, substantially improving over purely Frobenius-based estimates for deep architectures. Such bounds are particularly well suited for convolutional networks and graph neural networks \cite{liao2020pac,ju2023generalization,brilliantov2024compositional}, where operator norms capture intrinsic stability properties. This line of work also connects PAC-Bayes to robustness against adversarial or stochastic perturbations \cite{xiao2023pac,jin2022enhancing,11027475}.

\subsection{Motivation and Contributions}
\textbf{Motivation.}
Despite the substantial progress of PAC-Bayesian generalization bounds for deep networks, existing results still exhibit several limitations that motivate the development of a unified and more interpretable framework. First, most analyses adopt isotropic Gaussian posteriors for mathematical convenience, assuming homogeneous uncertainty across all parameters. Nevertheless, such an isotropy is rarely consistent with the highly anisotropic geometry of trained deep models observed in practice. Second, many bounds depend crucially on controlling the spectral norm of weight perturbations, where the use of matrix concentration inequalities may yield loose mean or variance estimates, particularly for deep or structured architectures, thereby limiting both tightness and interpretability. Third, current approaches typically treat networks in a largely architecture-agnostic manner, overlooking the rich structural properties of modern models and the heterogeneous sensitivity of the loss landscape with respect to different layers, modules, or parameter subspaces. These limitations suggest that a unified PAC-Bayesian norm-based framework—capable of accommodating anisotropic posteriors, refined perturbation control beyond spectral norms, and explicit architectural and sensitivity-aware structures—is necessary to more faithfully capture the generalization behavior of deep learning models.

\textbf{Contributions.} Motivated by the aforementioned limitations, this work makes the following contributions toward a unified and practically relevant framework of PAC-Bayesian norm-based generalization for deep learning:

\begin{enumerate}
    \item \textbf{Unified PAC-Bayesian framework from an optimization perspective.} We establish a unified framework for PAC-Bayesian norm-based generalization bounds, in which the derivation of the generalization bound is reformulated as a stochastic optimization problem in a layer-wise manner. In contrast to conventional analyses with isotropic Gaussian posteriors fixed \textit{a priori}, our framework allows for fully anisotropic Gaussian posteriors and treats them as optimization variables. By minimizing the KL divergence term, these posterior covariances can then be identified in a closed-form with respect to a properly designed, structure-aware sensitivity matrix. This perspective explicitly couples the network sensitivity to weight perturbations and the KL complexity term through the sensitivity matrix, providing a principled mechanism to adapt the posterior geometry to the loss landscape of the trained network.
    \item \textbf{Sensitivity-aware and structure-exploiting bounds.} The sensitivity matrix characterizes the difference of network outputs to weight perturbations, in such a way that the required perturbation condition in the margin-based bound can be satisfied by explicit designs of sensitivity matrices. This also avoids the spectral norm concentration in the previous works, but instead employs the more flexible concentration inequalities for vector norms. By accommodating various structural properties, the sensitivity matrix allows for heterogeneous weight perturbations to fit diverse loss landscapes of different networks. Imposing different properties on the sensitivity matrix—such as diagonal, residual, low-rank, circulant, and Toeplitz structures—leads to a family of PAC-Bayesian generalization bounds that recover, unify, and extend several existing results as special cases. This approach enables fine-grained control of weight perturbation effects without relying solely on spectral-norm concentration.
    \item \textbf{Tighter and more interpretable generalization guarantees.} Thanks to the unified framework with flexible designs of sensitivity matrices, the resulting PAC-Bayesian norm-based generalization bounds are shown to be comparable to, and strictly tighter than, existing ones (e.g., fully-connected, residual, and convolutional networks) that rely on isotropic perturbations and/or restrictive spectral-norm concentrations. By aligning posterior anisotropy with network architecture and heterogeneous parameter sensitivity, the proposed framework yields theoretically sharper and more interpretable bounds, offering new insights into the interplay between weight norms, loss landscape, and network structure in generalization. That is, network architectures do not merely matter parameter counts --- they refine the spectral complexity with shared weights and restrict the admissible weight sensitivity to loss landscapes. As such, generalization is governed by both the compact spectral complexity with refined weight norms and the weight perturbation geometry allowed by the loss landscape, both due to network architecture.
\end{enumerate}

The paper is organized as follows. In the next section, we provide some preliminaries in statistical learning theory, especially the norm-based bound derived in the literature, together with the main proof techniques. Section III presents the proposed unified framework and summarizes the resulting generalization bounds, along with specific designs of sensitivity matrices. This is followed by Section IV, which provides detailed generalization bounds for those specific designs of sensitivity matrices. Section V concludes the paper with some detailed proofs relegated to the appendix.

\section{Preliminaries}
\label{gen_inst}

\textbf{Notation.} We interchangeably use random variables and their realizations unless they are unclear from the context. We use $a$, $\va$, $\mA$, and $\gA$ to represent the scalar, vector, matrix, and set, respectively. Accordingly, $\va_i$ is the $i$-th element of $\va$, $\mA_{i,:}$ is the $i$-th row of $\mA$, and $\mA_{ij}$ is the element at the $i$-th row and the $j$-th column in $\mA$. The operators $\Tr(\cdot)$ and $\det(\cdot)$ are the trace and determinant of a square matrix, respectively.
Given a vector $\va$, the $\ell_2$ and $\ell_{\infty}$ vector norm are given by $\Norm{\va}_2=\sqrt{\sum_i \va_i^2}$ and $\Norm{\va}_{\infty}=\max_i \Abs{\va_i}$, with $\Norm{\va}_{\infty} \le \Norm{\va}_2$. Given a matrix $\mA$, we use $\Norm{\mA}_2$ and $\Norm{\mA}_F$ to represent the spectral norm and the Frobenius norm, such that $\Norm{\mA}_2=\max_{\vx \neq \vzero} \frac{\Norm{\mA \vx}_2}{\Norm{\vx}_2}$ and $\Norm{\mA}_F^2=\sum_i \sum_j \mA_{ij}^2 = \Tr(\mA^T\mA)$.
For any vector $\vx$ and any matrices $\mA$ and $\mB$, we have $\Norm{\mA\vx}_2 \le \Norm{\mA}_2 \Norm{\vx}_2$ and $\Norm{\mA \mB}_2 \le \Norm{\mA}_2 \Norm{\mB}_2$.

% \subsection{Preliminaries}
Following the problem setting of \cite{neyshabur2018pac}, we consider the classification task with the input $\vx \in \mathcal{X} \triangleq \{\vx \in \mathbb{R}^n: \|\vx\|_2 \le B\}$ and the output $y = \arg \max_y f_{\vw}(\vx)[y]\in \{1,\dots,K\} $, where $f_{\vw}: \mathcal{X} \to \mathbb{R}^K$ is a function specified by a parameterized learning model (e.g., neural networks) for the classification task with the collected model parameters $\vw$. Specifically, for a $d$-layer feedforward neural network $f_{\vw}(\vx)=\mW_d \phi(\mW_{d-1}\phi(\dots \phi(\mW_1 \vx)))$ with $\phi$ being the ReLU activation function and $h_l$ the number of neurons at $l$-th layer,\footnote{For simplicity, the bias term is absorbed into the weight matrices by appending constant 1 to the node feature, which is a common trick in deep learning community.} we have the weight matrix $\mW_l \in \R^{h_{l} \times h_{l-1}}$ and $\vw=\mathrm{vec}(\{\mW_l\}_{l=1}^d)$ where $\mathrm{vec}(\cdot)$ is the vectorization of a matrix/tensor. 
Let $N=\sum_{l=1}^d h_l h_{l-1}$ denote the total number of model parameters with $h_0=n$ and $h_d=K$.
A dataset $\mathcal{S}=\{(\vx_1,y_1), \dots, (\vx_m,y_m)\}$ with $m$ training samples drawn identically and independently from an unknown distribution $\mathcal{D}$ is given to learn the model parameters.
\subsection{Norm-based Bounds}
\textbf{Margin loss.} For any distribution $\mathcal{D}$ and margin $\gamma>0$, the expected margin loss is defined as
\begin{align}
    L_{\gamma}(f_{\vw}) \triangleq \mathbb{E}_{(\vx,y) \sim \mathcal{D}} \vone\Big( f_{\vw}(\vx)[y] \le \gamma + \max_{j \neq y} f_{\vw}(\vx)[j] \Big)
\end{align}
where $\vone(\cdot)$ is the indicator function. The empirical margin loss $\hat{L}_{\gamma}(f_{\vw})$ is the estimate of $L_{\gamma}(f_{\vw})$ with the average over the training dataset $\mathcal{S}$. When $\gamma=0$, $L_{0}(f_{\vw})$ and $\hat{L}_{0}(f_{\vw})$ denote the expected risk and the training error, respectively.
Intuitively, the margin loss captures how confidently (measured by the margin) the model predicts the correct label relative to the incorrect ones. It depends explicitly or implicitly on weight norms/directions, the network's depth, activation functions, loss landscape, and training regularization.

\textbf{PAC-Bayesian generalization bounds.}  Given the stochastic classifier $f_{\vw}$ with the prior distribution $P$ and the corresponding posterior distribution $Q$ in the form of $\vw+\vu$, where $\vu$ is a random weight perturbation,
with probability at least $1-\delta$, we have the PAC-Bayesian generalization error bound \cite{mcallester2003simplified}
\begin{align}
    \MoveEqLeft \E_{\vu} [L_0(f_{\vw+\vu})] \le \E_{\vu} [\hat{L}_0(f_{\vw+\vu})] \nn \\
    &+ 2 \sqrt{\frac{2 \KL(\vw+\vu || P) + \ln \frac{2m}{\delta}}{m-1}},
\end{align}
where $\KL(\cdot || \cdot)$ is the KL divergence of two distributions.
For any $\gamma,\delta > 0$, for any $\vw$ and random perturbation $\vu$ subject to the perturbation condition
\begin{align} \label{eq:pert-condition}
    \mathbb{P}_{\vu} [\max_{\vx \in \gX} \|f_{\vw+\vu}(\vx)-f_{\vw}(\vx)\|_{\infty} < \frac{\gamma}{4} ] \ge \frac{1}{2}
\end{align}
with probability at least $1-\delta$, we have the norm-based PAC-Bayesian generalization error bound \cite{mcallester2003simplified,langford2002pac}
\begin{align}
    L_0(f_{\vw}) \le \hat{L}_{\gamma}(f_{\vw}) + 4 \sqrt{\frac{\KL(\vw+\vu || P) + \ln \frac{6m}{\delta}}{m-1}}.
\end{align}

Let the random weight perturbation be $\vu=(\vu_1^T,\dots,\vu_d^T)^T$ where $\vu_l=\mathrm{vec}(\mU_l)$ is the weight perturbation at $l$-th layer with $\mU_l$ being the perturbation added to $\mW_l$.
Let $h = \max_l h_l$ for simplicity.
Assuming $\vu \sim \mathcal{N}(\vzero,\sigma^2 \mI)$, for any $B, d, h > 0$, for a $d$-layer neural network with ReLU activation functions, for any $\gamma, \delta > 0$, with probability at least $1-\delta$, we have the following spectrally-normalized generalization bound \cite{neyshabur2018pac}
\begin{align} \label{eq:gen-bound-origin}
    L_0(f_{\vw}) \le \hat{L}_{\gamma}(f_{\vw}) + \mathcal{O} \left( \sqrt{ \frac{B^2 d^2 h \ln(dh) \Phi(\vw) + \ln \frac{dm}{\delta}}{\gamma^2 m}} \right)
\end{align}
with spectral complexity $\Phi(\vw)=\prod_{l=1}^d \|\mW_l\|_2^2 \sum_{l=1}^d \frac{\|\mW_l\|_F^2}{\|\mW_l\|_2^2}$. The term $\prod_{l=1}^d \|\mW_l\|_2^2$ indicates the global Lipschitz constant of the neural networks, which measures how sensitive the output is to the inputs. The term $\sum_{l=1}^d \frac{\|\mW_l\|_F^2}{\|\mW_l\|_2^2}$ reflects how concentrated or flat each weight matrix is, and is also related to the model capacity measured with Rademacher complexity.

The above norm-based bound bridges the function space and parameter space through $\gamma$ and $\Phi(\vw)$. The margin $\gamma$ is defined in the output function space, while spectral complexity $\Phi(\vw)$ with respect to weight norms measures the parameter space. The generalization error gap is bounded by the ratio $\frac{\Phi(\vw)}{\gamma^2}$, favoring low spectral complexity and high margin for a tight bound. The spectral complexity usually measures the ``intrinsic capacity'' or ``sensitivity'' of the network architectures with model weights, which can be reduced by explicit regularization, such as weight decay, spectral norm regularization, or implicit regularization like SGD. On the contrary, the margin measures the ``robustness'' or ``confidence'' of the model's predictions on the data, which can usually be ensured by, e.g., max-margin training and label smoothing. The training process is usually trying to find a good tradeoff between them, aligning perfectly with the norm-based generalization bound.

\subsection{Proof Techniques}
\textbf{Spectral normalization.}
In the above spectrally normalized generalization error bound, the spectral norms of $\{\mW_l\}_{l=1}^d$ are normalized across the layers. Specifically, let $\beta=\sqrt[d]{\prod_{l=1}^d \Norm{\mW_l}_2}$ and the normalized weights, i.e., $\Tilde{\mW}_l = \frac{\beta}{\Norm{\mW_l}_2}\mW_l$ are considered. Such a weight normalization does not change the model output, nor the empirical and expected (margin) loss, due to the homogeneity of the ReLU \cite{neyshabur2018pac}. Given that the weight normalization does not change the spectral complexity, i.e., $\Phi(\vw)=\Phi(\Tilde{\vw})$, it is assumed in \cite{neyshabur2018pac} to consider the same spectral norm across layers, i.e., $\Norm{\mW_l}_2=\beta$ for all $l$, without loss of generality. 

The key gradients of the spectrally-normalized bounds consist of the following steps:
\begin{enumerate}
    \item Given any fixed $\hat{\beta}$, for all $\beta$ such that $\Abs{\hat{\beta}-\beta} \le \frac{1}{d}\beta$, choose a proper $\sigma^2$ as a function of $\hat{\beta}$ to satisfy the perturbation condition based on the concentration inequality of spectral norm $\mU_l$ and the perturbation bound \eqref{eq:pert-bound-origin};
    \item Given the choice of $\sigma^2$, compute the KL divergence and bound the difference between expected and empirical margin loss for the fixed $\hat{\beta}$;
    \item Given the bound for a fixed $\hat{\beta}$, derive the final generalization error bound for all possible $\hat{\beta}$ by taking a union bound over all chosen $\hat{\beta}$ so that every $\beta$ can be covered.
\end{enumerate}

The norm-based generalization bound in \cite[Lemma 1]{neyshabur2018pac} is under the perturbation condition in \eqref{eq:pert-condition}, which is enforced by a deterministic perturbation bound and a concentration inequality on the spectral norm.
Specifically, given the fact that $\Norm{\cdot}_\infty \le \Norm{\cdot}_2$, the perturbation condition with probability at least $\frac{1}{2}$ is satisfied by the perturbation bound with respect to the spectral norm $\Norm{\mU_l}_2$ and its concentration inequality.

\textbf{Perturbation bound.}
The generalization bound \eqref{eq:gen-bound-origin} is built upon a key lemma of the perturbation bound in \cite[Lemma 2]{neyshabur2018pac}, where the change of the output due to weight perturbation is upper bounded as
\begin{align} \label{eq:pert-bound-origin}
    \Norm{f_{\vw+\vu}(\vx)-f_{\vw}(\vx)}_2 \le e B \prod_{l=1}^d \|\mW_l\|_2 \sum_{l=1}^d \frac{\|\mU_l\|_2}{\|\mW_l\|_2}
\end{align}
when $\|\mU_l\|_2 \le \frac{1}{d}\|\mW_l\|_2$, for any weight perturbation $\vu$ and any bounded input $\vx$ subject to $\Norm{\vx}\le B$. 
Notice that $\prod_{l=1}^d \|\mW_l\|_2$ is the multiplicative chain of Lipschitz constants across layers, and $\sum_{l=1}^d \frac{\|\mU_l\|_2}{\|\mW_l\|_2}$ linearly aggregates perturbation sensitivity via a weighted sum of normalized perturbations.
The above perturbation bound is essential to derive the tight generalization bound, and has been extended to adversarial settings \cite{xiao2023pac,jin2022enhancing,11027475} and graph neural networks \cite{liao2020pac,ju2023generalization,brilliantov2024compositional}.

\textbf{Concentration inequality.}
Given an isotropic Gaussian distribution $\vu \sim \mathcal{N}(0,\sigma^2 I)$, the concentration inequality with respect to $\|\mU_l\|_2$ is given by
\begin{align}
    \mathbb{P}_{\vu_i}(\Norm{\mU_l}_2 > t) \le 2 h \exp\left({-\frac{t^2}{2h\sigma^2}}\right)
\end{align}
for which an upper bound of $\|\mU_l\|_2$ can be derived with probability at least $\frac{1}{2}$ by setting $t$ as a function of $h$ and $\sigma^2$. This can be used to upper-bound the perturbation bound in order to satisfy the perturbation condition in \eqref{eq:pert-condition}. The largest permissible choice of $\sigma^2$ as a function of $h,d,B,\gamma$ and $\prod_{l=1}^d \Norm{\mW_l}_2$ is to ensure the network output shift under $\vu$ stays within the margin $\gamma$ with high probability, while keeping the KL-term as small as possible.

\textbf{Limitations.} Albeit a promising approach, there are two limitations in existing works. First, most existing works consider the isotropically Gaussian distributed weight perturbation, i.e., $\vu \sim \mathcal{N}(\vzero,\sigma^2 \mI)$, assuming that the weight perturbations in different directions are identically uncertain and independently play the same role. In reality, some weight patterns (e.g., convolutional filters or graph diffusion matrices) or weight directions (in the parameter space) are more sensitive to the perturbations, and therefore hurt the network's margin much more than others. The isotropic perturbation treats every weight as equally sensitive, and thus cannot capture the ``sharpness'' of weight directions or patterns. 
Second, although the perturbation bound \eqref{eq:pert-bound-origin} appears elegant and tight, it renders the choices of $\sigma^2$ reliance on the concentration inequality of matrix spectral norm, resulting in high inflexibility to take into account any potential structure of the model weights. This motivates us to reconsider the necessity of the perturbation bound for the sake of flexible and unifying designs.

\section{A Unified Framework}
\label{headings}
To address the above issues, we propose a unified framework by introducing two auxiliary variables.
\begin{enumerate}
    \item Consider an anisotropic weight perturbation, e.g., $\vu \sim \mathcal{N}(0,\sigma^2 \mR)$ to capture different sensitivities to weight perturbations, where $\mR$ will be optimized to minimize the KL divergence and, in turn, the complexity measure;
    \item Introduce a sensitivity matrix $\mA$ to encode different network structures, so as to impose different sensitivities on the perturbation to the output, and enable the flexible designs of perturbation bounds.  
\end{enumerate}

Therefore, the derivation of the generalization bound can be done by solving the following optimization problem
\begin{subequations} \label{eq:general-opt}
\begin{align} 
    \min_{\sigma^2, \mR, \mA} \quad & \KL(\vw+\vu || P)\\
    \st \quad &\mathbb{P}_{\vu \sim \gN(0, \sigma^2 \mR)} (\Norm{\mA \vu}_2 < \frac{\gamma}{4}) \ge \frac{1}{2},\\
    & \Norm{f_{\vw+\vu}(\vx)-f_{\vw}(\vx)}_{\infty} \le \Norm{\mA \vu}_2,
\end{align}
\end{subequations}
where the $\ell_\infty$ norm is upper-bounded by the $\ell_2$ norm in such a way that proper concentration inequalities of the $\ell_2$ norm ensure a closed-form solution $\mR$ with respect to $\mA$. Here $\mA$ is referred to as a Jacobian-like sensitivity matrix, which is a function of the weights and the inputs. The choices of $\mA$ play a key role in tightening the PAC-Bayes generalization bounds through $\mR$. When it makes $\mR$ aligning with flat directions of the loss landscape, the expected margin loss drift can be minimized, resulting in a lower empirical loss under weight perturbation. When suppressing the variance of $\mR$ in sensitive directions, it avoids large KL divergence, giving us a lower model complexity penalty.

However, directly optimizing \eqref{eq:general-opt} is intractable, as it involves the perturbation condition in a probabilistic form.
As such, it enables us to employ the concentration inequality of the $\ell_2$-norm of the vector $\mA \vu$, rather than the spectral norm of $\mU_l$, to derive more flexible perturbation bounds, ultimately yielding more adaptable bounds.

\subsection{The General Framework}
\subsubsection{Anisotropic weight perturbation} To capture the diverse sensitivities of the model's outputs to weight perturbations, we relax the commonly used assumption of the isotropic Gaussian distribution to the anisotropic settings such that $\vu \sim \mathcal{N}(\vzero,\sigma^2 \mR)$ where $\sigma^2$ is a tunable variable shared by both prior and posterior, and $\mR \triangleq \mathbb{E}[\vu \vu^T]$ is the positive definite covariance matrix of weight perturbations across the $d$-th layer. 
This allows us to encode in $\mR$ different sensitivities and any correlations between weight perturbations.
With a general $\mR \ne \mI$, we can down-weight the ``sharp'' directions where a tiny perturbation could collapse the margin, and up-weight the ``flat'' directions when there is some slack, in such a way that the resulting PAC-Bayesian norm-based bound could reflect the actual geometry of the loss landscape. 

Moreover, by optimizing $\mR$ towards the KL minimization, we unlock the potential of achieving much tighter norm-based bounds.
A larger variance $\mR$ implies more stochasticity, helping reduce margin loss under perturbation (cf. a flatter loss landscape) at the cost of increased KL. On the contrary, reducing the variance can decrease the KL divergence, but it also increases the sensitivity.
As such, the PAC-Bayesian bound encourages a structured design of $\mR$ to put variance in directions where the loss is stable, and suppress variance where the margin is sensitive.

There is also a tradeoff between $\sigma^2$ and $\mR$ due to the perturbation condition. Too large $\sigma^2$ leads to excessive perturbations that make the condition fail, while a poorly structured $\mR$, e.g., large variances in sensitive directions, may also break the condition. In reality, $\sigma^2$ should be as large as possible to minimize the KL divergence while satisfying the condition, and $\mR$ should align with the model sensitivity to avoid perturbing unstable directions.

\subsubsection{Alternative perturbation condition}
With the consideration of anisotropic Gaussian distributed weight perturbations, we need an upgraded concentration inequality for the spectral norm of correlated weight perturbation $\mU_l$ as in \cite{11027475}. Nevertheless, we can circumvent such a requirement by constraining the perturbation bound with the following alternative, i.e.,
\begin{align} \label{eq:pert-cond}
    \|f_{\vw+\vu}(\vx)-f_{\vw}(\vx)\|_{\infty} \le \|\mA \vu\|_2
\end{align}
for any $\vw$ and $\vx \in \gX$, 
where $\mA \in \R^{N \times N}$ is a sensitivity matrix, serving as a proxy for a global Jacobian. As such, the original perturbation condition is enforced by bounding the change in the network’s output using a Jacobian-type matrix $\mA$.
By such an updated perturbation condition, the original weight-space perturbation can be translated to function-space stability.
Thus, to satisfy the perturbation condition, we need to guarantee that
\begin{align}
    \mathbb{P}_{\vu \sim \gN(\vzero,\sigma^2 \mR)} \big[ \Norm{\mA \vu}_2 < \frac{\gamma}{4} \big] \ge \frac{1}{2}.
\end{align}

\subsubsection{Concentration inequality of the $\ell_2$ vector norm}
Let $\vu=\sigma \mR^{\frac{1}{2}}\vz$ with $\vz \sim \gN(\vzero,\mI)$ and $\mM=\mA \mR \mA^T$ positive semi-definite, we have
\begin{align}
    \Norm{\mA \vu}_2^2 = \sigma^2 \vz^T \mM \vz, \qquad \E [\Norm{\mA \vu}_2^2] = \sigma^2 \Tr(\mM).
\end{align}
By \cite[Proposition 1.1]{hsu2012tail}, we have
\begin{align}
  \MoveEqLeft  \mathbb{P}\Big(\Norm{\mA \vu}_2^2 \le \sigma^2(\Tr(\mM) + \sqrt{4t} \Norm{\mM}_F + 2t \Norm{\mM}_2)\Big) \nn \\
  &\ge 1 - \exp(-t)
\end{align}
for all $t>0$. 
This can be similarly done by the Hanson-Wright inequality \cite{rudelson2013hanson} that bounds the quadratic form of sub-Gaussian random vectors, such as $\Norm{\mA \vu}_2^2$. It also concentrates the quadratic form on its mean $\Tr(\mM)$ with a deviation term controlled via $\Norm{\mM}_F$ and $\Norm{\mM}_2$.

Let $1 - \exp(-t) = \frac{1}{2}$, we have $t=\ln 2$. 
Thus, we conclude that, with probability at least $\frac{1}{2}$, it holds
\begin{align}
     \Norm{\mA \vu}_2^2 &\le \sigma^2 (\underbrace{\Tr(\mM) + \sqrt{4\ln2} \Norm{\mM}_F + 2\ln 2 \Norm{\mM}_2}_{\triangleq \; \Gamma(\mM)})
\end{align}

It is worth noting here that the above concentration inequality works directly on the $\ell_2$ vector norm of $\mA \vu$, rather than the spectral norm of $\mU_l$ as those in \cite{neyshabur2018pac,xiao2023pac,jin2022enhancing,11027475,liao2020pac}. This brings in two benefits: 1) it is more flexible to consider anisotropic Gaussian distributions for the vector norm than the spectral norm as in \cite{11027475}; and 2) this avoids layer-wise consideration of spectral norms and the need of a union bound over layers of concentration inequalities, therefore eliminating a $\ln (dh)$ factor in the choices of $\sigma^2$ as in \cite{neyshabur2018pac,xiao2023pac,jin2022enhancing,11027475,liao2020pac} and many others.

By letting $\sigma^2 \Gamma(\mM) = \frac{\gamma^2}{16}$, we have
\begin{align} \label{eq:perb-cond-Gamma}
    \frac{1}{\sigma^{2}} = \frac{16 \Gamma(\mM)}{\gamma^2}
\end{align}
to satisfy the perturbation condition. Note here that the choice of $\sigma^2$ scales as $\gamma^2$, and the perturbation condition still holds if some upper bounds of $\Gamma(\mM)$ are in place in \eqref{eq:perb-cond-Gamma}.

Note also here that the choice of $\sigma^2$ is also used for the isotropic Gaussian prior, which usually cannot depend on the learned weights $\{{\mW}_l\}_{l=1}^d$ that may be a part of $\mM$ through the flexible design of $\mA$. To address this issue, we follow the strategy in \cite{neyshabur2018pac} that has been widely accepted in the follow-up works in e.g., \cite{xiao2023pac,jin2022enhancing,11027475,liao2020pac}, to add some adjustment on the choice of $\sigma^2$ according to the approximated weights $\{\hat{\mW}_l\}_{l=1}^d$.

\subsubsection{KL divergence minimization} Given the Gaussian prior $P=\gN(\vzero,\sigma^2\mI)$ and posterior $Q=\gN(\vw, \sigma^2 \mR)$ with the same $\sigma$, the KL divergence can be written by \cite{pardo2018statistical}
\begin{align}
    \KL(Q||P) &= \frac{1}{2} \left( \frac{\Norm{\vw}_2^2}{\sigma^2} + \Tr(\mR) - \log \det \mR - \dim (\mR) \right) \nn \\
    &= \frac{1}{2} \left( \frac{16\Norm{\vw}_2^2}{\gamma^2} \Gamma(\mA \mR \mA^T) \right. \nn \\
    & \qquad  \left. + \Tr(\mR) - \log \det \mR - \sum_{l=1}^d h_l h_{l-1} \right)
\end{align}
due to the choice of $\sigma^2$.
Note that $\frac{\Norm{\vw}_2^2}{\sigma^2}$ penalizes the large weights related to posterior uncertainty, $\Tr(\mR)$ is the total variance budget across weight directions, $\log \det \mR$ is the volume of correlation for anisotropic uncertainty, and $\dim(\mR)$ is the dimension of the Gaussian distributed vector $\vu$.
To tighten the generalization error bound, we can minimize $\KL(Q||P)$ by choosing the proper covariance matrix $\mR$. As such, we formulate an optimization problem
\begin{align}
    \min_{\mR \in \gS_{+}} \gJ(\mR) \triangleq \frac{16\Norm{\vw}_2^2}{\gamma^2} \Gamma(\mA \mR \mA^T) + \Tr(\mR) - \log \det \mR
\end{align}
where $\gS_{+}$ is the space of symmetric positive semi-definite matrices. By $\min_{\mR \in \gS_{+}} \gJ(\mR)$, we end up with $\KL(Q||P)$ as a function of $\mA$ that is carefully designed to satisfy the perturbation condition in \eqref{eq:pert-cond}, followed by the final generalization error bound with each specific design of $\mA$. Although $\gJ(\mR)$ is convex on $\mR$, it is still challenging to find a neat closed-form solution using, e.g., KKT conditions, due to the non-differentiability of the spectral norm. Therefore, for simplicity and traceability, we instead minimize a tractable upper bound of $\gJ(\mR)$ to have a closed-form solution to $\mR$.

For the positive demi-definite matrix $\mM=\mA \mR \mA^T$, we have
\begin{align}
    \Norm{\mM}_2 \le \Norm{\mM}_F \le \Tr(\mM)
\end{align}
where the equalities hold when $\mM$ has rank-1.
Therefore, we set $\frac{1}{\sigma^2}=\frac{16 \kappa}{\gamma^2} \Tr(\mM)$ to satisfy the perturbation bound, where 
\begin{align}
    \kappa=1+2\ln2 + \sqrt{4\ln2}.
\end{align}
As such, we end up with an upper bound of $\gJ(\mR)$ as
\begin{align*} 
\gD(\mR) \triangleq \frac{16 \kappa\Norm{\vw}_2^2}{\gamma^2} \Tr(\mA \mR \mA^T) + \Tr(\mR) - \log \det \mR.
\end{align*}
 Since $\gD(\mR)$ is convex over $\mR$, the minimization of $\gD(\mR)$ using KKT conditions yields a closed-form solution, denoted by $\mR^*=\arg \min_{\mR \in \gS_{+}} \gD(\mR)$, i.e.,
\begin{align}
    \mR^* = \left(\mI + \frac{16 \kappa\Norm{\vw}_2^2}{\gamma^2} \mA^T \mA  \right)^{-1},
\end{align}
which shrinks variance along sensitive directions with large $\mA^T \mA$. 
If $\mA^T \mA$ is diagonal, then $\mR^*$ is diagonal, recovering the i.i.d. case; If $\mA^T \mA$ is low-rank or with spectral decay, then $\mR^*$ effectively allocates less variance to sensitive directions, tightening the KL bound.  
Note also that $\mR^*$ is also dependent on the margin $\gamma$. When $\gamma$ is small, $\mR^*$ relies on the choice of $\mA$ so that the directional sensitivity dominates; when $\gamma$ is large, $\mR^*$ reduces to the isotropic case so that the sharpness of the loss landscape does matter.

Plugging $\mR^*$ in the KL divergence, we have
\begin{align}
    \KL(Q||P) = \frac{1}{2} \left( \frac{\Norm{\vw}_2^2}{\sigma^2} + \Tr(\mR^*) - \log \det \mR^* - N \right). 
\end{align}
Together with the proper choice of $\sigma^2$ and the design of $\mA$ to satisfy the perturbation condition, we end up with a new form of PAC-Bayesian generalization error bounds.

\begin{figure*}
\begin{subequations} \label{eq:main-opt}
\begin{align} 
    \min_{\sigma^2, \mR_l, \mA_l} \quad & \KL(\sigma^2,\mR_l) \triangleq \frac{1}{2} \sum_{l=1}^d\frac{\|\mW_l\|_F^2}{\sigma^2} + \Tr(\mR_l) - \log \det \mR_l - \dim(\mR_l)\label{eq:main-opt-obj}\\
    \mathrm{s.t.} \quad & \mathbb{P}_{\vu_l \sim \gN(\vzero,\sigma^2 \mR_l)} \Big[\sum_{l=1}^d \Norm{\mA_l \vu_l}_2^2 < \frac{\gamma^2}{16} \Big] \ge \frac{1}{2},\label{eq:perb-cond}\\
    &\|f_{\vw+\vu}(\vx)-f_{\vw}(\vx)\|_{\infty}^2 \le \sum_{l=1}^d\|\mA_l \vu_l\|_2^2, \label{eq:perb-bound}
\end{align}
\end{subequations}
\hrule
\end{figure*}

\subsection{Block-diagonal Simplification}
For simplicity, we assume there are no correlations of weight perturbations across layers, where the covariance matrix $\mR$ of weight perturbation possesses a block-diagonal structure, i.e., $\mR = \mathrm{blkdiag}(\mR_1, \dots, \mR_d)$ with each diagonal block $\mR_l$ up to optimization.
Similarly, we let $\mA = \mathrm{blkdiag}(\mA_1, \dots, \mA_d)$ with $\mA_l$ being carefully designed for each layer $l$, as a global measure of sensitivity without respecting inter-layer structures.

With the block-diagonal simplification, we advocate a unified framework to derive the norm-based PAC-Bayesian generalization error bounds by solving the optimization problem in \eqref{eq:main-opt} on the top of the next page. Note that the objective \eqref{eq:main-opt-obj} is to find the minimized KL divergence by choosing the proper $\mR_l$ and $\sigma^2$ to tighten the upper bound, the first constraint \eqref{eq:perb-cond} comes from the perturbation condition with properly chosen $\sigma^2$, and the second constraint \eqref{eq:perb-bound} is to guide the choices of $\mA_l$.

In general, it is challenging to solve the problem directly. Instead, we propose the following route to find some feasible solutions, which end up with new generalization error bounds for different designs of $\{\mA_l\}_{l=1}^d$. The key is to make a good tradeoff between $\mA_l$ and $\mR_l$, where the former captures weight perturbation as much as possible to satisfy the perturbation condition, and the latter aligns sensitivity to reduce the influence of weight perturbation as much as possible for tighter bounds.

\subsubsection{Figure out the prior variance $\sigma^2$ to satisfy the perturbation condition} In general, we need to choose a proper $\sigma^2$ as a function of $\gamma$, $\mA_l$ and $\mR_l$ to satisfy the perturbation condition \eqref{eq:perb-cond} with the following chain of inequalities
\begin{align}
\Norm{\mA \vu}_2^2 =  \sum_{l=1}^d \Norm{\mA_l \vu_l}_2^2 &\le \sigma^2  \Gamma(\mM) \nn \\ 
&\le \sigma^2  \sum_{l=1}^d \Gamma(\mM_l) \le \frac{\gamma^2}{16} 
\end{align}
where $\mM_l=\mA_l \mR_l \mA_l^T$ and the second inequality is due to 
\begin{align}
&\Norm{\mM}_F=\sqrt{\sum_l \Norm{\mM_l}_F^2} \le \sum_l \Norm{\mM_l}_F,\\
&\Norm{\mM}_2 = \max_l \Norm{\mM_l}_2 \le \sum_l \Norm{\mM_l}_2,
\end{align}
for block diagonal matrix $\mM=\mathrm{blkdiag}(\mM_1,\dots,\mM_d)$.
As $\mM_l$ is positive semi-definite, we have $\Norm{\mM_l}_2 \le \Norm{\mM_l}_F \le \Tr(\mM_l)$, and therefore the perturbation condition can be rewritten as
\begin{align}
   \sum_{l=1}^d \Norm{\mA_l \vu_l}_2^2 &\le \sigma^2  \sum_{l=1}^d \Gamma(\mM_l) \nn \\
   &\le \sigma^2 \kappa \sum_{l=1}^d \Tr(\mM_l) \le \frac{\gamma^2}{16}. \label{eq:chain-of-ineq}
\end{align}
By setting 
\begin{align} \label{eq:sigma-diag}
\frac{1}{\sigma^2} = 
\frac{16 \kappa}{\gamma^2} \sum_{l=1}^d \Tr({{\mM}}_l), 
\end{align}
the original perturbation condition can be satisfied with probability at least $\frac{1}{2}$. 
As said earlier, $\sigma^2$ should be independent of the learned weight $\{\mW_l\}_{l=1}^d$, thus 
$\sigma^2$ should be adjusted later, when specifying $\mA_l$ as a function of $\{\hat{\mW}_l\}_{l=1}^d$, yet to obtain the optimized $\mR_l$ we still proceed with the $\sigma^2$ in \eqref{eq:sigma-diag} for convenience. 

\subsubsection{Minimize KL} Given the choice of $\sigma^2$, we have the upper bound $\gJ(\mR) \le \sum_l \gD(\mR_l)$ where
\begin{align} \label{eq:D_Rl_diag}
    \gD(\mR_l)=\frac{16 \kappa\Norm{\vw}_2^2}{\gamma^2} \Tr(\mA_l \mR_l \mA_l^T) + \Tr(\mR_l) - \log \det \mR_l.
\end{align}
To figure out the optimal $\mR_l^*$ as a function of $\gamma$ and $\mA_l$, by KKT conditions, we have
\begin{align} \label{eq:opt-Rl}
    \mR_l^* = \left(\mI + \frac{16 \kappa\Norm{\vw}_2^2}{\gamma^2} \mA_l^T \mA_l  \right)^{-1}.
\end{align}
Note that the design of $\mR_l^*$ shrinks variance along sensitive directions (those with large $\mA_l^T\mA_l$), which aligns with the idea of robust weight perturbation. When $\mA_l^T\mA_l$ is diagonal with independent perturbation across dimensions, $\mR_l^*$ is diagonal, recovering somehow the i.i.d. setting. When $\mA_l^T\mA_l$ is of low-rank or has a spectrum with decay, $\mR_l^*$ allocates less variance to sensitive directions, which may tighten the KL divergence.

\subsubsection{Design sensitivity matrices} The goal of designing $\mA_l$ is to capture the local Lipschitz sensitivity of the network output to the weight perturbation $\vu_l$. It turns out to be finding a surrogate Jacobian-like matrix that captures the layer-wise or neuron-wise impact of perturbations.
with certain imposed structures to satisfy the perturbation bound \eqref{eq:perb-bound}, such as diagonal, low-rank, and Toeplitz structures. Some possible designs are as follows:
\begin{align}
\begin{array}{ll}
     \text{Diagonal:} &  \mA_l =  e\sqrt{d}B \prod_{i\ne l} \Norm{\mW_i}_2  \mI \\
     \text{Residual:} &  \mA_l =  e\sqrt{d}B \prod_{i\ne l} (\Norm{\mW_i}_2+1)  \mI \\
     \text{Low-rank:} & \mA_l = \sqrt{d} \mV_l \mathrm{diag}(\sigma_{l,1},\dots,\sigma_{l,K},0,\dots,0)\mV_l^T, \\
     &\text{with } \sigma_{l,k} = B\prod_{i \ne l} \Norm{\mW_i}_2\\
     \text{Circulant:} & \mA_l = \sqrt{d} B \prod_{i\ne l} \Norm{\mV^H \vw_i}_{\infty} \mV_{[1:K]} \mV_{[1:K]}^H\\
     &\text{with $\vw_i \in \R^h$ is the vectorized convolutional}\\ &\text{filter of $i$-th layer with circulant padding.} \\
     \text{Toeplitz:} & {\mA}_l =  \frac{ e\sqrt{d}B  \prod_{i\ne l} \Norm{\mW_i}_2}{\min_{\omega}\psi(\omega))}\mT \mP\\
     &\text{with $\mT$ a Toeplitz matrix with symbol $\psi(\omega)$}\\ &\text{and $\mP \in \R^{h^2 \times k}$ for kernel-Toeplitz mapping.} 
\end{array}      
\end{align}
where $\mV_l$ is the collection of the right singular vectors of the Jacobian matrix of $l$-th layer, $\mV_{[1:K]}$ is the collection of $K$ columns of the discrete Fourier transform (DFT) matrix, and the additional factor $e$ is due to the consideration of the weights in choosing $\sigma^2$ in the prior.

For the diagonal $\mA_l$, we aim to explore the worst-case sensitivity of all directions in each layer with an equal strength, yet with maybe distinct strengths across layers. By setting $\mA_l$ to be a low-rank matrix, we wish to explore some typical directions extensively while ignoring other directions. In contrast, with $\mA_l$ being circulant matrices, we want to mimic the behavior of convolutional layers, and explore the impact of parameter sharing in generalization. Finally, by considering $\mA$ being graph Laplacian matrices, we investigate the influence of graph structures in generalization. In fact, $\mA_l$ could be also data-dependent as a function of $\vx$, yet for simplicity, we employ $\Norm{\vx}_2 \le B$ instead for any input $\vx$.

Different choices of sensitivity structures encode distinct inductive biases. The diagonal sensitivity corresponds to the worst-case and parameter-wise robustness, whilst the low-rank sensitivity reflects the output-driven or task-aligned complexity. In contrast, the circulant structure models global and periodic correlations across parameters, while the Toeplitz structure captures smoothness and locality induced by weight sharing.

\subsubsection{Obtain final bounds} 
With the adjusted ${\sigma}^2$
and the optimized $\mR_l^*$ in \eqref{eq:opt-Rl}, we can obtain the resulting KL divergence in \eqref{eq:main-opt-obj} or its upper bounds, i.e.,
\begin{align}
     \KL(\{\mA_l\}_{l=1}^d) &=  \frac{1}{2} \sum_{l=1}^d \frac{\|\mW_l\|_F^2}{{\sigma}^2} + \Tr(\mR_l^*) \nn \\
    &\qquad \qquad - \log \det \mR_l^* - h_lh_{l-1} 
\end{align}
by which we end up with new generalization error bounds
\begin{align} \label{eq:gene-bounds-all}
    L_0(f_{\vw}) \le \hat{L}_{\gamma}(f_{\vw}) + \gO \left( \sqrt{ \frac{ B^2 \Delta(d,h,\vw) + \ln \frac{dm}{\delta}}{\gamma^2m}} \right)
\end{align}
where 
\begin{align}
\Delta(d,h,\vw) = \left\{
    \begin{matrix}
        d^2h^2\Phi(\vw), & \text{$\mA_l$ diagonal}\\
        d^2h^2\Phi^{\mathrm{rn}}(\vw), & \text{$\mA_l$ residual}\\
        d^2K\Phi(\vw), & \text{$\mA_l$ low-rank}\\
        d^2K{\Phi}^{\mathrm{circ}}(\vw), & \text{$\mA_l$ circulant}\\
        d^2k\Phi^{\mathrm{toep}}(\vw), & \text{$\mA_l$ Toeplitz}
    \end{matrix}
    \right.
\end{align}
with 
\begin{align}
    \Phi(\vw)&=\prod_{l=1}^d \Norm{\mW_l}_2^2  \sum_{l=1}^d \frac{\Norm{\mW_l}_F^2}{\Norm{\mW_l}_2^2}, \\ 
    \Phi^{\mathrm{rn}}(\vw)&=\prod_{l=1}^d (\Norm{\mW_l}_2+1)^2  \sum_{l=1}^d \frac{\Norm{\mW_l}_F^2}{(\Norm{\mW_l}_2+1)^2},\\
    {\Phi}^{\mathrm{circ}}(\vw)&=\prod_{i\ne l} \Norm{\mV^H \vw_i}_{\infty}^2 \sum_{l=1}^d \Norm{\vw_l}_2^2,\\
    \Phi^{\mathrm{toep}}(\vw)&=\frac{\psi_{\max}^2}{\psi_{\min}^2} \prod_{i\ne l} \Norm{\vw_i}_1^2 \sum_{l=1}^d \Norm{\vw_l}_2^2
\end{align}
with the variables specified later. 
It is worth noting that, each choice of sensitivity matrix $\mA_l$ defines a PAC-Bayes bound that is valid for networks whose perturbation geometry is compatible with $\mA_l$. With certain structured choices (e.g., residual, low-rank, circulant, and Toeplitz) of $\mA_l$, we end up with strictly sharper bounds when the network architecture enforces the corresponding weight sharing or spectral constraints. When the architecture does not enforce such a structure, the bound remains valid only after relaxing $\mA_l$ to a worst-case (diagonal) form, leading to looser generalization guarantees. 

Finally, we form a recipe of deriving spectrally-normalized PAC-Bayesian generalization bounds, as in Fig.~\ref{fig:recipe}.

\begin{figure}[t]
\begin{center}
\includegraphics[width=0.3\textwidth]{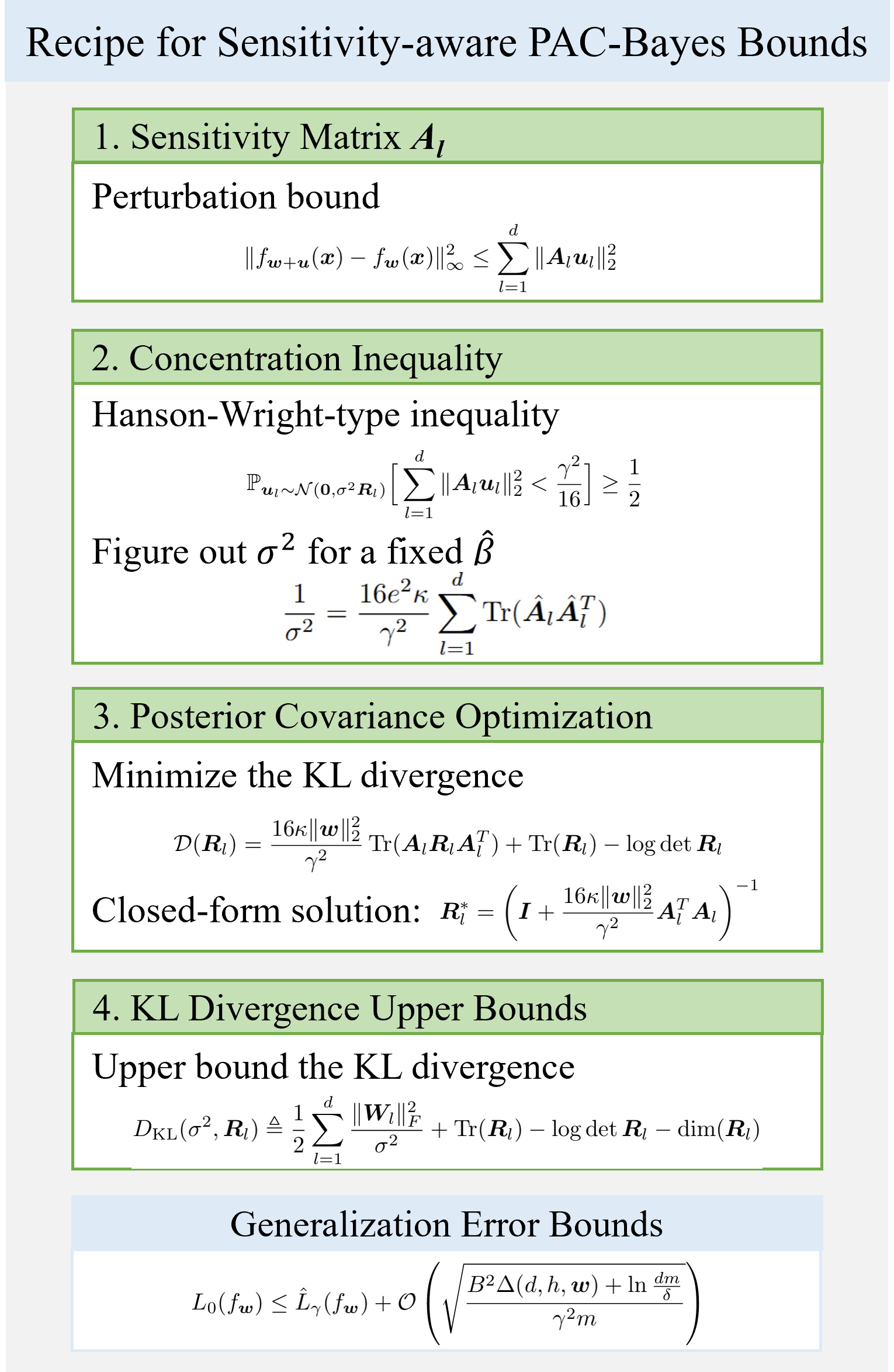}
\end{center}
\caption{The recipe of spectrally-normalized sensitivity-aware PAC-Bayesian generalization bounds.}
\label{fig:recipe}
\end{figure}

\section{Specific Sensitivity Matrices}
\label{others}
Given the above recipe, we only need to find a set of proper sensitivity matrices $\{\mA_l\}_{l=1}^d$ so that the perturbation condition is satisfied. After doing so, we can proceed directly with the computation of the optimized $\mR_l^*$ as in \eqref{eq:opt-Rl} and the KL term as in \eqref{eq:D_Rl_diag}, followed by the final generalization bounds.

The designs of $\mA_l$'s should satisfy the perturbation conditions imposed in \eqref{eq:perb-bound} and \eqref{eq:perb-cond}. At the same time, as $\sigma^2$ is a function of $\{\mA_l\}_{l=1}^d$, which involves $\{\Norm{\mW_l}_2\}_{l=1}^d$, we need to adjust it with the approximated $\{\Norm{\hat{\mW}_l}_2\}_{l=1}^d$.

\subsection{Diagonal Matrices}
\label{sec:diag-matrix}
By letting $\mA_l$ be a diagonal matrix, we intend to assume the perturbations are independent per-layer (or even per-parameter), each layer contributes separately to output sensitivity, and there's no cross-talk or interaction between different parameter blocks. This matches the layerwise Lipschitz structure, where the change in output is dominated by per-layer norms and no mixing across dimensions is modeled.

\begin{theorem}\label{thm:diag-matrix}
Consider a general feedforward neural network $f_{\vw}: \gX \to \R^K$ with ReLU activations, depth $d$, width $h$, and bounded inputs $\Norm{\vx}_2 \le B$.
    By setting the sensitivity matrix as
    \begin{align} \label{eq:diagonal-Al}
        \mA_l =  \frac{ e\sqrt{d}B \prod_l \Norm{\mW_l}_2}{\Norm{\mW_l}_2} \mI, \ \forall~l,
    \end{align}
    for any $\delta,\gamma>0$, with probability at least $1-\delta$, for any $\vw$, the generalization error over a training set with size $m$ can be upper-bounded by
    \begin{align}
    L_0(f_{\vw}) \le \hat{L}_{\gamma}(f_{\vw}) + \gO \left( \sqrt{ \frac{ B^2 d^2h^2\Phi(\vw) + \ln \frac{dm}{\delta}}{\gamma^2m}} \right)
    \end{align}
    where $\Phi(\vw)=\prod_l \Norm{\mW_l}_2^2  \sum_{l=1}^d \frac{\Norm{\mW_l}_F^2}{\Norm{\mW_l}_2^2}$.
\end{theorem}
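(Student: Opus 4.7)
Proof Proposal.

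The plan is to specialize the unified recipe of Section III to the diagonal choice in \eqref{eq:diagonal-Al}. Because $\mA_l^{T}\mA_l=\alpha_l\mI$ with $\alpha_l \triangleq e^{2}dB^{2}\prod_{i}\|\mW_i\|_2^{2}/\|\mW_l\|_2^{2}$, every matrix quantity in the optimization collapses to a scalar multiple of the identity, so each of the four generic steps (perturbation bound, choice of $\sigma^{2}$, optimization of $\mR_l$, KL evaluation) can be carried out in closed form and the main challenge reduces to bookkeeping and to translating intermediate norm products into the canonical spectral complexity $\Phi(\vw)$.

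First I would verify the perturbation bound \eqref{eq:perb-bound}. Starting from the classical layerwise inequality \eqref{eq:pert-bound-origin}, applying $\|\cdot\|_{\infty}\le\|\cdot\|_{2}$, Cauchy--Schwarz on the $d$-term sum over layers (which furnishes the $\sqrt{d}$ factor in $\mA_l$), and $\|\mU_l\|_2\le\|\mU_l\|_F=\|\vu_l\|_2$ yields $\|f_{\vw+\vu}(\vx)-f_{\vw}(\vx)\|_{\infty}^{2}\le e^{2}dB^{2}\prod_{i}\|\mW_i\|_2^{2}\sum_{l}\|\vu_l\|_2^{2}/\|\mW_l\|_2^{2}=\sum_{l}\|\mA_l\vu_l\|_2^{2}$, which is exactly the form required in \eqref{eq:perb-bound}. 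The auxiliary precondition $\|\mU_l\|_2\le\|\mW_l\|_2/d$ underlying \eqref{eq:pert-bound-origin} is enforced with high probability by the Gaussian tail and the choice of $\sigma^{2}$ made below.

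Second I would instantiate the optimization machinery. The KKT formula \eqref{eq:opt-Rl} collapses to $\mR_l^{*}=r_l\mI$ with $r_l=(1+16\kappa\|\vw\|_2^{2}\alpha_l/\gamma^{2})^{-1}$, and \eqref{eq:sigma-diag} gives $\sigma^{-2}=16\kappa\sum_{l}\alpha_l r_l h_l h_{l-1}/\gamma^{2}$. Substituting back into the reduced per-layer objective produces the compact identity $\gD(\mR_l^{*})=h_l h_{l-1}+\log\det(\mI+16\kappa\|\vw\|_2^{2}\mA_l^{T}\mA_l/\gamma^{2})=h_l h_{l-1}\bigl(1+\log(1+16\kappa\|\vw\|_2^{2}\alpha_l/\gamma^{2})\bigr)$. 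Relaxing $\log(1+x)\le x$, using $h_l h_{l-1}\le h^{2}$, and expanding $\sum_l\alpha_l=e^{2}dB^{2}\prod_{i}\|\mW_i\|_2^{2}\sum_{l}1/\|\mW_l\|_2^{2}$ brings the KL into the form $O\bigl(B^{2}dh^{2}\|\vw\|_2^{2}\prod_{i}\|\mW_i\|_2^{2}\sum_{l}1/\|\mW_l\|_2^{2}/\gamma^{2}\bigr)$. Re-organising the double sum $\|\vw\|_2^{2}\sum_{l}1/\|\mW_l\|_2^{2}=\sum_{j,l}\|\mW_j\|_F^{2}/\|\mW_l\|_2^{2}$ and bounding it by $d\sum_{l}\|\mW_l\|_F^{2}/\|\mW_l\|_2^{2}$ injects the canonical spectral complexity $\Phi(\vw)$ and produces the target scaling $B^{2}d^{2}h^{2}\Phi(\vw)/\gamma^{2}$. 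Because this $\sigma^{2}$ depends on the learned norms $\{\|\mW_l\|_2\}$, I would then adopt Neyshabur \emph{et al.}'s standard device: discretise the range of each $\|\hat{\mW}_l\|_2$ on a multiplicative grid of resolution $(1+1/d)$ so that $(1+1/d)^{d}\le e$ (absorbed into the prefactor of $\mA_l$), let the prior variance depend only on the surrogate $\{\hat{\mW}_l\}$, and take a union bound over the cover; this contributes the additive $\ln(dm/\delta)$ term. Plugging the bounded KL and adjusted $\sigma^{2}$ into \eqref{eq:gen-bound-origin} yields the stated generalisation bound.

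The main obstacle, and the step I expect to be most delicate, is the final re-organisation of the double sum into the canonical $\Phi(\vw)=\prod_{i}\|\mW_i\|_2^{2}\sum_{l}\|\mW_l\|_F^{2}/\|\mW_l\|_2^{2}$: the naive tensorisation $\|\vw\|_2^{2}\sum_{l}1/\|\mW_l\|_2^{2}$ can over-count by as much as $d^{2}$, while only one factor of $d$ is admissible to match the stated scaling. The correct argument relies on a Chebyshev/rearrangement-type coupling between the sequences $\{\|\mW_l\|_F^{2}\}$ and $\{\|\mW_l\|_2^{2}\}$---they are monotone functionals of the same singular-value spectrum---allowing one factor of $d$ to be absorbed into the spectral-complexity sum. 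A secondary subtlety is to verify that the same $\sigma^{2}$ simultaneously satisfies the precondition of \eqref{eq:pert-bound-origin} and the Hanson--Wright-type concentration bound \eqref{eq:perb-cond-Gamma}, both with probability at least $1/2$ before the KL adjustment.
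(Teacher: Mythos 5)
Your overall route is the same as the paper's (specialize the recipe: Cauchy--Schwarz plus the perturbation bound of \cite[Lemma~2]{neyshabur2018pac} to verify \eqref{eq:perb-bound}, closed-form $\mR_l^*$, KL evaluation with $\log(1+x)\le x$, then a cover and union bound), but the step you yourself flag as most delicate contains a genuine error. You propose to control
\begin{align}
\Norm{\vw}_2^2\sum_{l=1}^d \frac{1}{\Norm{\mW_l}_2^2}=\sum_{j,l}\frac{\Norm{\mW_j}_F^2}{\Norm{\mW_l}_2^2}\;\le\; d\sum_{l=1}^d\frac{\Norm{\mW_l}_F^2}{\Norm{\mW_l}_2^2}
\end{align}
by a Chebyshev/rearrangement coupling. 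By Chebyshev's sum inequality this direction requires $\{\Norm{\mW_l}_F^2\}$ and $\{1/\Norm{\mW_l}_2^2\}$ to be \emph{similarly} ordered, i.e., layers with larger Frobenius norm must have \emph{smaller} spectral norm; the two norms being ``monotone functionals of the same singular-value spectrum'' gives you no such cross-layer ordering. A two-layer counterexample: $\Norm{\mW_1}_F^2=\Norm{\mW_1}_2^2=100$ (rank one) and $\Norm{\mW_2}_F^2=\Norm{\mW_2}_2^2=1$ gives left-hand side $101\times 1.01\approx 102$ versus right-hand side $2\times 2=4$, so the inequality fails by a factor close to $d^2/\text{(something)}$ exactly as you feared.

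The paper does not need any such coupling because it invokes the spectral-normalization device of Section~II.B first: by positive homogeneity of ReLU one may rebalance the layers so that $\Norm{\mW_l}_2=\beta$ for all $l$ without changing $f_{\vw}$, the margin losses, or $\Phi(\vw)$. Under this normalization $\sum_l 1/\Norm{\mW_l}_2^2=d/\beta^2$ and the reorganization becomes an exact identity, $\Norm{\vw}_2^2\sum_l\beta^{-2}=d\sum_l\Norm{\mW_l}_F^2/\Norm{\mW_l}_2^2$, with no loss. This also changes your covering argument: you only need a one-dimensional grid over the single scalar $\hat\beta$ with $\Abs{\beta-\hat\beta}\le\beta/d$ (cover size $dm^{1/(2d)}$, giving the $\ln\frac{dm}{\delta}$ term), not a per-layer product grid. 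A secondary gap: the precondition $\Norm{\mU_l}_2\le\Norm{\mW_l}_2/d$ of \cite[Lemma~2]{neyshabur2018pac} is not handled in the paper by ``the Gaussian tail'' alone; it is verified deterministically from the chosen $\sigma^2$ (requiring $\sigma^2\le\beta^2/(\kappa d^2h^2)$, which in turn needs $\beta^d\ge\gamma/(4eB)$) and this is exactly why the proof must separately dispose of the trivial ranges $\beta^d<\gamma/(2B)$ and $\beta^d>\gamma\sqrt{m}/(2B)$ before taking the union bound. With the normalization step inserted and the $\beta$-range argument added, the rest of your outline matches the paper's proof.
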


\begin{proof}
See Appendix \ref{proof:diag-matrix}.
\end{proof}

\begin{remark} \upshape
    In contrast to the isotropic Gaussian posterior in \cite{neyshabur2018pac}, the diagonal sensitivity matrix design gives rise to a diagonal optimized posterior covariance matrix $\mR_l^*$ as in \eqref{eq:diagonal-Rl}, where the intra- and inter-layer correlation of model weights are fully decoupled with only different scales across layers. Hence, the obtained generalization error bounds are also similar yet with different scaling on $d$ and $h$. In fact, the choice of $\mA_l$ in \eqref{eq:diagonal-Al} would end up with a looser bound because the perturbation condition \eqref{eq:perb-bound} is validated by the perturbation bound in \eqref{eq:pert-bound-origin}. Nevertheless, with the optimized choice of $\mR_l$ in the unified framework, we end up with a generalization error bound comparable to that in \eqref{eq:gen-bound-origin}.
    Compared with the bound in \cite{neyshabur2018pac}, it appears our bound scales similarly as $\Phi(\vw)$, being tighter for deep networks with $d^2$ versus $d^2 \ln d$, yet looser for wide networks with $h^2$ versus $h\ln h$. 
    Although it is simple and always valid, it incurs the worst scaling of $h^2$, ignoring structure, correlations, and weight sharing.

    With such a diagonal (or isotropic) sensitivity matrix in \eqref{eq:diagonal-Al}, weight perturbations are applied independently and equally in all parameter directions. This corresponds to enforcing uniform flatness of the loss landscape in every coordinate direction, i.e., the loss must not increase significantly under small independent perturbations of each parameter. Such a uniform flatness favors globally flat minima in the standard Euclidean sense, which matches the classical ``flat minima'' intuition, i.e., small Hessian eigenvalues in all directions. However, it would be overly conservative, as it penalizes sharpness even in directions that have little effect on the output, and ignores correlations and task-relevant subspaces.
\end{remark}

\begin{corollary}\label{cor:diag-matrix-ResNet}
    Consider a residually connected feedforward neural network $f_{\vw}: \gX \to \R^K$ with ReLU activations, depth $d$, width $h$, and bounded inputs $\Norm{\vx}_2 \le B$, where the $l$-th residual connection layer is $f_{\vw}^l(\vx) = \mW_l \phi(f_{\vw}^{l-1}(\vx)) + f_{\vw}^{l-1}(\vx)$.
    By setting the sensitivity matrix as
    \begin{align} \label{eq:diagonal-Al-residual}
        \mA_l =  \frac{ e\sqrt{d}B \prod_l (\Norm{\mW_l}_2+1)}{\Norm{\mW_l}_2+1} \mI, \ \forall~l,
    \end{align}
    for any $\delta,\gamma>0$, with probability at least $1-\delta$, for any $\vw$, the generalization error over a training set with size $m$ can be upper-bounded by
    \begin{align}
    L_0(f_{\vw}) \le \hat{L}_{\gamma}(f_{\vw}) + \gO \left( \sqrt{ \frac{ B^2 d^2h^2\Phi^{\mathrm{rn}}(\vw) + \ln \frac{dm}{\delta}}{\gamma^2m}} \right)
    \end{align}
    where $\Phi^{\mathrm{rn}}(\vw)=\prod_l (\Norm{\mW_l}_2+1)^2  \sum_{l=1}^d \frac{\Norm{\mW_l}_F^2}{(\Norm{\mW_l}_2+1)^2}$.
\end{corollary}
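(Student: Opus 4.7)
The plan is to mirror the proof of Theorem~\ref{thm:diag-matrix}, adapting only the single ingredient that genuinely changes for residual blocks: the deterministic perturbation bound on the network output. Concretely, I would first establish a residual-network analogue of \eqref{eq:pert-bound-origin}, namely that whenever $\Norm{\mU_l}_2 \le \frac{1}{d}(\Norm{\mW_l}_2+1)$ for every $l$,
\begin{align}
\Norm{f_{\vw+\vu}(\vx)-f_{\vw}(\vx)}_2 \le eB \prod_{l=1}^{d}(\Norm{\mW_l}_2+1)\sum_{l=1}^{d}\frac{\Norm{\mU_l}_2}{\Norm{\mW_l}_2+1}.
\end{align}
This is proved by induction on depth using the recursion $f_{\vw}^{l}(\vx)=\mW_l \phi(f_{\vw}^{l-1}(\vx))+f_{\vw}^{l-1}(\vx)$. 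Expanding $f^{l}_{\vw+\vu}-f^{l}_{\vw}$ produces a direct term $\mU_l \phi(f^{l-1}_{\vw+\vu})$ together with $\mW_l(\phi(f^{l-1}_{\vw+\vu})-\phi(f^{l-1}_{\vw}))+(f^{l-1}_{\vw+\vu}-f^{l-1}_{\vw})$, whose 1-Lipschitz shortcut contributes a per-layer amplification factor of exactly $\Norm{\mW_l}_2+1$; the standard $(1+1/d)^d \le e$ telescoping then absorbs the accumulated error.

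Next, I would verify that the sensitivity matrices prescribed by \eqref{eq:diagonal-Al-residual} satisfy both constraints of the unified optimization \eqref{eq:main-opt}. Writing $\mA_l = c_l \mI$ with $c_l=\frac{e\sqrt{d}B \prod_{i}(\Norm{\mW_i}_2+1)}{\Norm{\mW_l}_2+1}$, the Cauchy--Schwarz inequality applied to $\sum_l \Norm{\mU_l}_2/(\Norm{\mW_l}_2+1)$ combined with $\Norm{\mU_l}_2 \le \Norm{\vu_l}_2$ and $\Norm{\cdot}_\infty \le \Norm{\cdot}_2$ yields
\begin{align}
\Norm{f_{\vw+\vu}(\vx)-f_{\vw}(\vx)}_\infty^2 \le \sum_{l=1}^{d} c_l^2 \Norm{\vu_l}_2^2 = \sum_{l=1}^{d}\Norm{\mA_l \vu_l}_2^2,
\end{align}
which is exactly \eqref{eq:perb-bound}. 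Constraint \eqref{eq:perb-cond} is then enforced by choosing $\sigma^2$ via \eqref{eq:sigma-diag}, followed by the standard Neyshabur-style grid adjustment over approximated $\{\Norm{\hat\mW_l}_2+1\}_{l=1}^{d}$ to remove the dependence of the prior on the learned weights.

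Finally, I would plug the scaled-identity $\mA_l$ into the closed-form optimum \eqref{eq:opt-Rl}. Since $\mA_l^T \mA_l = c_l^2 \mI$, the optimizer $\mR_l^*$ is itself a scaled identity, so the trace and log-determinant terms in \eqref{eq:D_Rl_diag} can be evaluated exactly as in Theorem~\ref{thm:diag-matrix} with every occurrence of $\Norm{\mW_l}_2$ replaced by $\Norm{\mW_l}_2+1$. Substituting into the McAllester--Langford PAC-Bayesian bound and applying the union bound over the discretized grid of $\hat\beta$ values produces the claimed rate with $\Phi^{\mathrm{rn}}(\vw)=\prod_l(\Norm{\mW_l}_2+1)^2 \sum_l \Norm{\mW_l}_F^2/(\Norm{\mW_l}_2+1)^2$.

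The main obstacle is the inductive step for the residual perturbation bound: unlike the purely feedforward case, the identity shortcut propagates all earlier perturbations forward additively rather than purely multiplicatively through $\phi$, so one must carefully verify that the per-layer amplification factor is exactly $\Norm{\mW_l}_2+1$ and that the $(1+1/d)^d \le e$ telescoping still controls the accumulated error under the rescaled smallness condition $\Norm{\mU_l}_2 \le \frac{1}{d}(\Norm{\mW_l}_2+1)$. Once this geometric step is in place, every remaining computation is an essentially unchanged transcription of the Theorem~\ref{thm:diag-matrix} proof, with the substitution $\Norm{\mW_l}_2 \mapsto \Norm{\mW_l}_2+1$ propagating mechanically through the KL evaluation and the final bound.
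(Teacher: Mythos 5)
Your proposal is correct and follows essentially the same route as the paper: establish the residual analogue of the perturbation bound with per-layer factor $\Norm{\mW_l}_2+1$, apply Cauchy--Schwarz to verify \eqref{eq:perb-bound} for the scaled-identity $\mA_l$, and then transcribe the Theorem~\ref{thm:diag-matrix} machinery with $\Norm{\mW_l}_2 \mapsto \Norm{\mW_l}_2+1$ (including the grid approximation, which the paper handles via the elementary estimate $\tfrac{1}{e}(\beta+1)^{d-1} \le (\hat\beta+1)^{d-1} \le e(\beta+1)^{d-1}$). The only difference is that you re-derive the residual perturbation bound by induction, whereas the paper simply cites an existing result (Theorem~8 of \cite{xiao2023pac}); your inductive sketch is the standard argument behind that result and is sound.
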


\begin{proof}
See Appendix \ref{proof:diag-matrix-ResNet}.
\end{proof}
\begin{remark}\upshape
    For residual networks, the resulting sensitivity matrix takes a form closely related to the diagonal case, with the key difference that each layer contributes a factor of $\Norm{\mW_l}_2+1$ rather than $\Norm{\mW_l}_2$ to the spectral complexity term. Consequently, the generalization bound differs from that of standard feedforward networks only through the replacement of the spectral norm by its shifted counterpart, reflecting the presence of identity skip connections. This structure reveals several properties: (i) perturbations propagate additively along residual paths instead of being purely multiplicative, preventing exponential amplification across depth; (ii) each residual branch contributes largely independently to the overall sensitivity, consistent with the modular design of residual blocks; and (iii) cross-layer amplification of perturbations is effectively suppressed by the identity mapping, yielding improved stability without requiring explicit spectral compression of weights. Notably, this bound is architecture-aware---it exploits the intrinsic modularity and skip-connectivity of residual networks. Thus, the bound provides a theoretical explanation for the empirical robustness and favorable generalization of residual architectures. That is, stability is achieved structurally, through additive identity pathways, rather than solely through norm control of individual layers.
\end{remark}

\subsection{Low-Rank Matrices}
The motivation for designing a low-rank sensitivity matrix stems from the observation that, for most neural networks, the output dimension $K$ is much smaller than the hidden dimension $h$, implying that perturbations in the parameter space can affect the network output only through a low-dimensional subspace. Consequently, controlling weight perturbations uniformly in all $h^2$ parameter directions—as is implicitly done by diagonal sensitivity in the layerwise spectral-norm–based analyses such as those in \cite{neyshabur2018pac}—is overly conservative, unnecessarily leading to looser generalization bounds.

\begin{theorem}\label{thm:low-rank-matrix}
Consider a feedforward neural network $f_{\vw}: \gX \to \R^K$ with ReLU activations, depth $d$, width $h$, bounded inputs $\Norm{\vx}_2 \le B$, and a potential low-rank structure.
    By setting a low-rank sensitivity matrix as 
\begin{align} \label{eq:low-rank-Al}
    \mA_l = \sqrt{d} \mV_l \mathrm{diag}(\sigma_{l,1},\dots,\sigma_{l,K},0,\dots,0)\mV_l^T
\end{align}
with $K\ll h^2$, where $\mV_l$ is the collection of right eigenvectors of the Jacobian of the output with respect to the weights at the $l$-th layer, i.e., $\mJ_l(\vx) = \frac{\partial f_{\vw}(\vx)}{\partial \mathrm{vec}(\mW_l)} \in \R^{K \times h^2}$, and
\begin{align}
    \sigma_{l,k} = B\frac{\prod_{l=1}^d \Norm{\mW_l}_2}{\Norm{\mW_l}_2}, \ \forall~k,
\end{align}
 for any $\delta,\gamma>0$, with probability at least $1-\delta$, for any $\vw$, the generalization error over a training set with size $m$ can be upper-bounded by
    \begin{align}
    L_0(f_{\vw}) \le \hat{L}_{\gamma}(f_{\vw}) + \gO \left( \sqrt{ \frac{B^2d^2K \Phi(\vw) + \ln \frac{dm}{\delta}}{\gamma^2m}} \right).
    \end{align}
\end{theorem}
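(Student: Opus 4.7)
My plan is to instantiate the unified optimization recipe of Section III with the prescribed low-rank sensitivity matrices $\{\mA_l\}_{l=1}^d$, proceeding through the standard four steps: verify the perturbation bound (\ref{eq:perb-bound}), solve (\ref{eq:sigma-diag}) for $\sigma^2$, substitute into the closed-form (\ref{eq:opt-Rl}) for $\mR_l^*$, and assemble the resulting KL term into the PAC-Bayesian margin bound. The central observation that enables the improvement from $h^2$ (as in Theorem \ref{thm:diag-matrix}) to $K$ is that, since $f_\vw$ maps into $\R^K$ with $K \ll h^2$, the layerwise Jacobian $\mJ_l(\vx)$ has rank at most $K$, so the network output is insensitive to any weight perturbation lying in the $(h^2-K)$-dimensional null space of $\mJ_l$, which makes posterior variance along those directions effectively free.

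To verify the perturbation bound, I would use the path-integral identity $f_{\vw+\vu}(\vx) - f_\vw(\vx) = \sum_l \int_0^1 \mJ_l(\vx;\vw+t\vu)\,\vu_l\,dt$ together with the standard layerwise Jacobian estimate $\|\mJ_l(\vx)\|_2 \le B\prod_{l'\ne l}\|\mW_{l'}\|_2 = \sigma_{l,k}$ for ReLU networks. Writing $\mJ_l = \mU_l\Lambda_l\mV_l^T$ with at most $K$ non-zero singular values all bounded by $\sigma_{l,k}$, the prescribed $\mA_l$ satisfies $\mA_l^T\mA_l = d\,\mV_l\,\mathrm{diag}(\sigma_{l,k}^2)\mV_l^T \succeq d\,\mJ_l^T\mJ_l$, hence $\|\mA_l\vu_l\|_2^2 \ge d\|\mJ_l\vu_l\|_2^2$. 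Combining with Cauchy--Schwarz $(\sum_l \|\mJ_l\vu_l\|_2)^2 \le d\sum_l \|\mJ_l\vu_l\|_2^2$ and $\|\cdot\|_\infty \le \|\cdot\|_2$ yields $\|f_{\vw+\vu}(\vx) - f_\vw(\vx)\|_\infty^2 \le \sum_l \|\mA_l\vu_l\|_2^2$, establishing (\ref{eq:perb-bound}); the $\sqrt{d}$ factor in $\mA_l$ is introduced precisely to absorb the Cauchy--Schwarz penalty.

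Because $\mA_l^T\mA_l$ has exactly $K$ non-zero eigenvalues each equal to $d\sigma_{l,k}^2 = dB^2\prod_{l'\ne l}\|\mW_{l'}\|_2^2$, the trace is $\Tr(\mA_l^T\mA_l) = dK\sigma_{l,k}^2$. Substituting into (\ref{eq:sigma-diag}) and invoking spectral normalization $\|\mW_l\|_2 = \beta$ gives $1/\sigma^2 = \gO(d^2 K B^2 \beta^{2(d-1)}/\gamma^2)$, so the dominant KL contribution becomes $\|\vw\|_2^2/\sigma^2 = \bigl(\sum_l \|\mW_l\|_F^2\bigr)/\sigma^2 = \gO(d^2 K B^2 \Phi(\vw)/\gamma^2)$. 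The optimized $\mR_l^*$ from (\ref{eq:opt-Rl}) inherits the eigenstructure of $\mA_l^T\mA_l$: it has $K$ eigenvalues $(1+\alpha d\sigma_{l,k}^2)^{-1}$ and $h_l h_{l-1}-K$ unit eigenvalues, so the remaining term $\Tr(\mR_l^*) - \log\det \mR_l^* - h_l h_{l-1}$ collapses to a sum of only $K$ logarithmic terms, of lower order than the $\|\vw\|_2^2/\sigma^2$ contribution. Plugging this KL into the PAC-Bayesian bound and applying the standard union bound over a grid of $\beta$-values (contributing the $\ln(dm/\delta)$ factor, exactly as in Theorem \ref{thm:diag-matrix}) delivers the stated bound.

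The main obstacle I anticipate is the input-dependence of $\mV_l$ and of the activation pattern entering $\mJ_l$: strictly speaking, the sensitivity matrix $\mA_l$ must be specified independently of the random perturbation, yet $\mV_l$ is defined through $\mJ_l(\vx)$. I would address this by either taking $\mV_l$ to be the right singular vectors of a worst-case Jacobian over $\gX$ and invoking the uniform singular-value bound $\sigma_{l,k} \ge \|\mJ_l(\vx)\|_2$ for every $\vx \in \gX$, or by treating $\mV_l$ as data-adaptive and appealing to a data-dependent prior argument. Activation-pattern changes under random $\vu$ are handled by the same local-linearization trick underlying (\ref{eq:pert-bound-origin}), at the cost of an absolute constant; the remaining bookkeeping, i.e., adjusting $\sigma^2$ so that it depends on $\{\hat{\mW}_l\}$ rather than $\{\mW_l\}$, follows verbatim the argument used in Theorem \ref{thm:diag-matrix}.
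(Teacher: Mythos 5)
Your proposal follows essentially the same route as the paper's own proof: the same low-rank $\mA_l$ built from the right singular vectors of $\mJ_l(\vx)$ with uniform singular-value bound $B\prod_{l'\ne l}\|\mW_{l'}\|_2$, the same Cauchy--Schwarz step absorbing the $\sqrt{d}$ factor, the same trace computation $\Tr(\mA_l^T\mA_l)=dK\sigma_{l,k}^2$ feeding the choice of $\sigma^2$, the same observation that $\Tr(\mR_l^*)-\log\det\mR_l^*-h_lh_{l-1}$ reduces to $K$ lower-order terms, and the same union bound over $\hat\beta$. If anything, your path-integral formulation and your explicit flagging of the input-dependence of $\mV_l$ and of activation-pattern changes are more careful than the paper's treatment, which uses a first-order Taylor expansion and silently discards the $o(\|\vu_l\|_2)$ remainder.
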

\begin{proof}
See Appendix \ref{proof:low-rank-matrix}.
\end{proof}
\begin{remark} \upshape
By explicitly leveraging the output bottleneck, a low-rank sensitivity design of $\mA_l$ in \eqref{eq:low-rank-Al} restricts perturbations to the $K$-dimensional subspace that is actually relevant for the output. It, therefore, yields generalization bounds that scale with $K$ rather than $h^2$, which are strictly tighter than the existing results in \cite{neyshabur2018pac} with $K \le h$ for most practical networks. 

The design of low-rank $\mA_l$ is naturally characterized through the Jacobian matrix $\mJ_l(\vx)$, which captures the local sensitivity of the network output with respect to the weights at layer $l$. The dominant eigenvectors in $\mV_l$ identify the directions in parameter space along which perturbations most strongly influence the output. By aligning the sensitivity matrix 
$\mA_l$ with the principal subspace of $\mJ_l(\vx)$, the perturbation is concentrated on the most influential directions, matching the intrinsic sensitivity structure of the network. As such, the effective Jacobian rank or the output dimension $K$ replaces $h^2$ in the model complexity, achieving tighter, task-aligned generalization bounds.

Low-rank sensitivity restricts perturbations to a subspace aligned with the row space of the Jacobian, where only the directions that actually affect the network outputs count. It enforces flatness only along output-relevant directions, allowing for possible sharp curvature in null directions that do not affect predictions.
From a loss landscape viewpoint, the generalization can still be good as long as the loss is flat in the prediction-sensitive subspace, even if the entire model may lie in a sharp basin globally. This aligns with the understanding that not all sharpness is harmful---only the output-relevant sharpness is.  
\end{remark}

\subsection{Circulant Matrices}
For fully convolutional models, parameters exhibit strong global structure induced by weight sharing and local translation invariance, which cannot be faithfully captured by diagonal or low-rank sensitivity designs. By choosing the sensitivity matrix $\mA_l$ to be circulant, we explicitly model correlations across adjacent parameters and emulate convolution-like interactions in the parameter space. More interestingly, by taking the Fourier transform of the weight perturbations, we can conduct perturbation analysis in the frequency domain. 

\begin{theorem} \label{thm:circulant-matrix}
    Consider a fully convolutional network model $f_{\vw}: \gX \to \R^K$ with ReLU activations, depth $d$, width $h$, and bounded inputs $\Norm{\vx}_2 \le B$. The weight matrix $\mW_l$ is comprised of convolutional filters $\vw_l \in \R^h$ with circulant padding at the $l$-th layer. By setting the sensitivity matrix as
    \begin{align}
    \mA_l = \sqrt{d} B \mV \mathrm{diag}(\underbrace{\lambda_l,\dots,\lambda_l,}_{K \text{ times}} 0,\dots,0) \mV^H
\end{align}
where $\lambda_l=\prod_{i\ne l} \Norm{\mV^H \vw_i}_{\infty}$
    for all $l$ with $\mV$ being the $h \times h$ normalized discrete Fourier transform (DFT) matrix, for any $\delta,\gamma>0$, with probability at least $1-\delta$, for any $\vw=(\vw_1,\dots,\vw_d)$, the generalization error over a training set with size $m$ can be upper-bounded by
    \begin{align}
    L_0(f_{\vw}) \le \hat{L}_{\gamma}(f_{\vw}) + \gO \left( \sqrt{ \frac{ B^2d^2K {\Phi}^{\mathrm{circ}}(\vw) + \ln \frac{dm}{\delta}}{\gamma^2m}} \right)
    \end{align}
with ${\Phi}^{\mathrm{circ}}(\vw)=\prod_{i\ne l} \Norm{\mV^H \vw_l}_{\infty}^2 \sum_{l=1}^d \Norm{\vw_l}_2^2$.
\end{theorem}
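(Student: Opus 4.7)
The plan is to exploit the fact that all circulant matrices are simultaneously diagonalized by the DFT matrix $\mV$, so that every layer-wise spectral quantity can be rewritten in the frequency domain. Under circulant padding, $\mW_l = \mV\, \mathrm{diag}(\mV^H \vw_l)\, \mV^H$, which immediately gives $\Norm{\mW_l}_2 = \Norm{\mV^H \vw_l}_\infty$ and, by Parseval's theorem, $\Norm{\mW_l}_F = \Norm{\vw_l}_2$. The same diagonalization applies to the perturbation matrix $\mU_l$ reshaped from $\vu_l$, yielding $\Norm{\mU_l}_2 = \Norm{\mV^H \vu_l}_\infty$. This reformulation turns the norm-based complexity terms into Fourier-domain quantities and aligns the unified recipe of Section~III with the designed $\mA_l$.

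First I would verify the perturbation bound constraint \eqref{eq:perb-bound}. Starting from the deterministic perturbation lemma \eqref{eq:pert-bound-origin}, which is layer-wise and thus carries over to circulant weights, together with $\Norm{\cdot}_\infty \le \Norm{\cdot}_2$, one obtains $\Norm{f_{\vw+\vu}(\vx)-f_{\vw}(\vx)}_\infty \le eB \prod_l \Norm{\mV^H \vw_l}_\infty \cdot \sum_l \Norm{\mV^H \vu_l}_\infty/\Norm{\mV^H \vw_l}_\infty$. Since the network output is only $K$-dimensional, at most $K$ Fourier modes of each $\vu_l$ can influence the output change; squaring, applying Cauchy--Schwarz across layers, and relaxing the $\ell_\infty$ norm to the $\ell_2$ norm restricted to the top-$K$ Fourier coordinates produces $\Norm{f_{\vw+\vu}(\vx)-f_{\vw}(\vx)}_\infty^2 \le \sum_l \Norm{\mA_l \vu_l}_2^2$ with $\mA_l = \sqrt{d}B\, \mV\,\mathrm{diag}(\lambda_l,\dots,\lambda_l,0,\dots,0)\mV^H$ and $\lambda_l = \prod_{i\ne l}\Norm{\mV^H \vw_i}_\infty$, exactly as stated.

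Next I would apply the unified recipe. Computing $\Tr(\mA_l^T \mA_l) = dB^2 K\lambda_l^2$ and summing over $l$ fixes $\sigma^2$ via \eqref{eq:sigma-diag}. The optimal posterior covariance is then $\mR_l^* = (\mI + 16\kappa\Norm{\vw}_2^2\, \mA_l^T\mA_l/\gamma^2)^{-1}$ from \eqref{eq:opt-Rl}; because $\mA_l$ has rank $K$, $\mR_l^*$ shrinks variance only along the $K$ active Fourier modes at each layer and equals the identity elsewhere, so $\Tr(\mR_l^*)$ and $\log\det\mR_l^*$ contribute only mildly. Plugging back into the KL expression, the dominant $\Norm{\mW_l}_F^2/\sigma^2$ term collapses, via Parseval, to a complexity proportional to $d B^2 K \sum_l \Norm{\vw_l}_2^2 \cdot \sum_l \lambda_l^2/\gamma^2$, matching $d^2 K\, \Phi^{\mathrm{circ}}(\vw)/\gamma^2$ up to constants. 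A standard union bound over a logarithmic grid of $\hat\beta$ covering the relevant range of $\prod_l \Norm{\mV^H \vw_l}_\infty$, as in \cite{neyshabur2018pac}, handles the weight-dependence of $\sigma^2$ and introduces the additive $\ln(dm/\delta)$ factor, completing the bound.

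The main obstacle will be the perturbation-bound verification: establishing rigorously that the rank-$K$ truncation in the DFT basis suffices to dominate the output perturbation. The standard layer-wise Lipschitz argument produces a bound involving all $h$ Fourier modes, so one must carefully track how the $K$-dimensional readout confines the effective perturbation subspace, possibly by writing the output mapping as a projection onto the appropriate top Fourier coordinates of the last-layer activations and propagating this projection through the ReLU nonlinearities layer by layer. Once this Fourier-truncation step is justified, the remaining computations---the KL minimization, the choice of $\sigma^2$, and the final union bound---follow by routine adaptation of the arguments already used for the diagonal and low-rank designs in Theorems \ref{thm:diag-matrix} and \ref{thm:low-rank-matrix}.
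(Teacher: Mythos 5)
Your overall architecture (DFT diagonalization, the recipe for $\sigma^2$, the optimized $\mR_l^*$, the KL computation, and the union bound over $\hat\beta$) matches the paper, but your route to the perturbation bound \eqref{eq:perb-bound} is different from the paper's and, as written, has a genuine gap. You start from the deterministic perturbation lemma \eqref{eq:pert-bound-origin}, rewrite it in the Fourier domain, and then assert that ``since the network output is only $K$-dimensional, at most $K$ Fourier modes of each $\vu_l$ can influence the output change,'' which lets you replace $\Norm{\mV^H\vu_l}_\infty$ (a maximum over all $h$ modes) by the $\ell_2$ norm restricted to the first $K$ Fourier coordinates. This step does not follow. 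The deterministic lemma charges the output change with $\Norm{\mU_l}_2=\Norm{\mV^H\vu_l}_\infty$ over \emph{all} $h$ modes; a perturbation supported entirely on mode $K+1$ of an intermediate layer's filter changes that layer's pre-activations, and the ReLU mixes frequencies, so its effect propagates to every output coordinate. The $K$-dimensional readout constrains the \emph{rank} of the end-to-end Jacobian with respect to each layer's weights, but it does not single out $K$ specific Fourier modes of $\vu_l$ as the only relevant ones, and certainly not the first $K$ columns of $\mV$. Consequently $\sum_l\Norm{\mA_l\vu_l}_2^2$ with the rank-$K$ $\mA_l$ does not dominate the right-hand side of \eqref{eq:pert-bound-origin}: any $\vu_l$ supported on the annihilated modes makes your claimed chain of inequalities fail. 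You flag this yourself as ``the main obstacle'' and propose to fix it by propagating a frequency projection through the ReLUs layer by layer, but ReLU does not commute with such projections, so the fix is precisely the unresolved hard part.

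The paper avoids this issue by abandoning the deterministic lemma for this case and instead using a first-order Taylor expansion with the frequency-domain layer-wise Jacobian $\Tilde{\mJ}_l(\Tilde{\vx})\in\mathbb{C}^{K\times h}$ (the same device as in the low-rank Theorem~\ref{thm:low-rank-matrix}): since $\Tilde{\mJ}_l$ has at most $K$ rows, $\Tilde{\mJ}_l^H\Tilde{\mJ}_l$ has rank at most $K$ and is dominated by a scaled rank-$K$ projector, which directly yields the stated $\mA_l$ without ever needing to truncate the perturbation itself. If you want to salvage your route, you would need to either adopt that Jacobian-domination argument or prove a genuinely new frequency-truncation lemma for ReLU networks; the remainder of your proposal (the trace computation $\Tr(\mA_l^T\mA_l)=dB^2K\lambda_l^2$, the Parseval identities $\Norm{\mW_l}_2=\Norm{\mV^H\vw_l}_\infty$ and $\Norm{\mW_l}_F^2 = h\Norm{\vw_l}_2^2$ up to the weight-sharing bookkeeping, and the final union bound) is sound and coincides with the paper's.
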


\begin{proof}
See Appendix \ref{proof:circulant-matrix}.
\end{proof}

\begin{remark}\upshape
For the convolutional layers in practical convolutional neural networks (CNNs), the weight matrix $\mW_l$ is a circulant matrix, i.e., $\mW_l = \mathrm{circ}(\vw_l)$, with the convolutional kernel $\vw_l$ circularly repeated across rows in $\mW_l$ according to cyclic convolution and padding \cite{sedghi2018the,gray2006toeplitz}. For fully CNNs, we have a new spectral complexity measure ${\Phi}^{\mathrm{circ}}(\vw)$, where the circulant weight matrix $\mW_l$ in the parameter space is replaced with the Fourier-transformed convolutional filter, i.e., $\mV^H \vw_l$, in the frequency domain. Weight sharing (cf. repeated kernel vectors with cyclic convolution) is also taken into account, where $\Norm{\mW_l}_F^2$ is replaced by  $\Norm{\vw_l}_2^2$. Such a spectral complexity is more aligned with convolutional architectures, and is substantially smaller than ${\Phi}(\vw)$, yielding significantly tighter generalization bounds.
It also demonstrates that architectural priors, i.e., translation equivariance and weight sharing, can be directly embedded into PAC-Bayesian generalization bounds via structured sensitivity matrices, yielding bounds that are not only sharper but also more interpretable from a frequency-domain signal processing viewpoint.

As $\mV$ is the DFT matrix, the sensitivity matrix $\mA_l$ is also circulant \cite{gray2006toeplitz}, aligning with the circulant weight matrix $\mW_l$. This allows for the sensitivity analysis in the frequency domain, where the Fourier-transformed weight perturbations $\mV^H \vu_l$ are added to $\mV^H \vw_l$ in the frequency domain, so that weight perturbations are regulated by $\lambda_l$ according to their frequency responses. 
The circulant $\mA_l$ implies that posterior perturbations are coupled through cyclic convolution, inducing global yet highly structured correlations among parameters, i.e., weight sharing. In the frequency domain, the gains of $\mA_l$, i.e., $\lambda_l$ for the first $K$ dimensions, directly modulate the posterior variance $\mR_l$ as in \eqref{eq:low-rank-Rl} across frequencies. That is, frequencies with larger gain in $\mA_l$ correspond to directions where the network output is more sensitive, and the posterior variance $\mR_l$ is therefore shrunk more aggressively to suppress harmful perturbations. This yields an anisotropic and frequency-selective perturbation model, so that stability can be enforced where it matters most, while ignoring insensitive frequency bands. 
    
With the circulant sensitivity, weight perturbation favors flatness per frequency band rather than per parameter.
It enforces flatness in frequency bands where sensitivity is high, and suppresses harmful high-frequency perturbations.
Notably, the loss landscape is encouraged to be flat with respect to structured, convolution-like perturbations rather than arbitrary ones. This connects nicely to observations that CNNs generalize by suppressing high-frequency filters and noise.
\end{remark}

\subsection{Toeplitz Matrices}
To model convolutional weight sharing without imposing artificial periodicity (i.e., circulant padding), we consider Toeplitz sensitivity matrices, which naturally encode locality and distance-dependent correlations among parameters, reflecting the fact that perturbations of a single kernel coefficient affect many spatial locations.

\begin{theorem} \label{thm:toep-matrix}
    Consider a linear convolutional network $f_{\vw}: \gX \to \R^K$ with ReLU activations, depth $d$, width $h$, and bounded inputs $\Norm{\vx}_2 \le B$. At the $l$-th linear convolutional layer, the weight matrix $\mW_l \in \R^{h \times h}$ is a Toeplitz matrix with kernel $\vw_l \in \R^k$ such that $\mW_l=\mathrm{toep}(\vw_l)$. By setting the sensitivity matrix as
    \begin{align}
        {\mA}_l =  \frac{ e\sqrt{d}B  \prod_{i\ne l} \Norm{\mW_i}_2}{\psi_{\min}}\mT \mP
    \end{align}
    for all $l$ with $\mT \in \R^{h^2 \times h^2}$ being a Toeplitz matrix with symbol $\psi(\omega) \in [\psi_{\min},\psi_{\max}]$ for all $\omega \in [0,2\pi]$ and $\mP \in \R^{h^2 \times k}$ a structured sparse matrix such that $\mathrm{vec}({\mathrm{toep}(\vw_l)})=\mP \vw_l$, for any $\delta,\gamma>0$, with probability at least $1-\delta$, for any $\vw$, the generalization error over a training set with size $m$ can be upper-bounded by 
\begin{align}
    L_0(f_{\vw}) \le \hat{L}_{\gamma}(f_{\vw}) + \mathcal{O} \left( \sqrt{ \frac{B^2d^2k \Phi^{\mathrm{toep}}(\vw) + \ln \frac{md}{\delta}}{\gamma^2m}} \right)
\end{align}
where the kernel spectral complexity is given by
\begin{align}
    \Phi^{\mathrm{toep}}(\vw)=\frac{\psi_{\max}^2}{\psi_{\min}^2} \prod_{i\ne l} \Norm{\vw_i}_1^2 \sum_{l=1}^d \Norm{\vw_l}_2^2. 
\end{align}
\end{theorem}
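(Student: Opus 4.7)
The plan is to specialize the unified recipe of Section~\ref{headings} to the choice ${\mA}_l = (e\sqrt{d}B \prod_{i\ne l}\|\mW_i\|_2/\psi_{\min})\,\mT\mP$, treating the kernel $\vw_l\in\mathbb{R}^k$ rather than $\mathrm{vec}(\mW_l)\in\mathbb{R}^{h^2}$ as the PAC-Bayes parameter, so that both the perturbation $\vu_l$ and the posterior covariance $\mR_l\in\mathbb{R}^{k\times k}$ live in kernel space. The five steps (perturbation bound, closed-form $\sigma^2$ via \eqref{eq:sigma-diag}, closed-form $\mR_l^*$ via \eqref{eq:opt-Rl}, KL evaluation, and union bound over weight magnitudes) are structurally identical to the diagonal case of Theorem~\ref{thm:diag-matrix}; what is new is the Toeplitz-specific algebra needed to translate between kernel-space and full-matrix-space quantities.

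The first step is verifying \eqref{eq:perb-bound}. Two algebraic facts do the work. The vectorization identity $\mathrm{vec}(\mathrm{toep}(\vu_l))=\mP\vu_l$ gives $\|\mU_l\|_F=\|\mP\vu_l\|_2$, and since the symbol of $\mT$ is bounded below by $\psi_{\min}>0$, the operator $\mT$ is invertible with $\|\mT^{-1}\|_2\le 1/\psi_{\min}$; hence $\|\mP\vu_l\|_2\le\|\mT\mP\vu_l\|_2/\psi_{\min}$. Plugging these into the standard perturbation bound \eqref{eq:pert-bound-origin} (using $\|\mU_l\|_2\le\|\mU_l\|_F$) and applying Cauchy--Schwarz to pass from $\sum_l a_l$ to $\sqrt{d\sum_l a_l^2}$ yields $\|f_{\vw+\vu}(\vx)-f_{\vw}(\vx)\|_\infty^2\le\sum_l\|\mA_l\vu_l\|_2^2$, as required.

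The second step is the KL computation. Writing $C_l=e\sqrt{d}B\prod_{i\ne l}\|\mW_i\|_2$, the sensitivity Gram matrix is $\mA_l^T\mA_l=(C_l^2/\psi_{\min}^2)\,\mP^T\mT^T\mT\mP$. The PSD sandwich $\psi_{\min}^2\mI\preceq\mT^T\mT\preceq\psi_{\max}^2\mI$ (which follows from the symbol bounds) gives $\mA_l^T\mA_l\preceq (C_l^2\psi_{\max}^2/\psi_{\min}^2)\,\mP^T\mP$, and $\mP^T\mP$ is diagonal with entries equal to the multiplicities of the kernel coefficients in $\mathrm{toep}(\vw_l)$, so $\vw_l^T\mP^T\mP\vw_l=\|\mW_l\|_F^2=\mathcal{O}(h)\|\vw_l\|_2^2$. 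Plugging the closed form of $\mR_l^*$ from \eqref{eq:opt-Rl} into the KL objective, invoking the AM--GM log-det bound $\log\det(\mI+\mB)\le k\log(1+\Tr(\mB)/k)$ for the log-det term and $\Tr(\mR_l^*)\le k$ for the trace term, the contribution of the layer-$l$ block collapses to $\mathcal{O}(k)$ logarithmic factors that are absorbed by the $\mathcal{O}(\cdot)$ notation. With $\sigma^{-2}$ given by \eqref{eq:sigma-diag} and Young's convolution inequality $\|\mW_l\|_2\le\|\vw_l\|_1$ replacing the spectral norm, the dominant $\|\vw\|_2^2/\sigma^2$ term produces the target scaling $d^2k\,\Phi^{\mathrm{toep}}(\vw)/\gamma^2$. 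A standard discretization of $\{\|\vw_l\|_2\}$ combined with a union bound, exactly as in the diagonal case, then supplies the $\ln(dm/\delta)$ term.

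The main obstacle will be the interaction between $\mP^T\mP$ and $\mT^T\mT$: because $\mT^T\mT$ is only approximately Toeplitz (its symbol is $|\psi(\omega)|^2$ strictly only in the asymptotic regime), one must work with the finite-dimensional PSD sandwich rather than with finer spectral asymptotics, and then carefully track how the weight-sharing multiplicities encoded in $\mP^T\mP$ interact with the kernel norms $\|\vw_l\|_2$ and $\|\vw_l\|_1$ when translating every full-matrix quantity into its kernel-space counterpart, so that no spurious $h/k$ factors leak into the final bound. Once this bookkeeping is settled, the remainder is a routine instantiation of the unified recipe.
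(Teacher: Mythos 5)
Your perturbation-bound step is essentially identical to the paper's: the identity $\mathrm{vec}(\mathrm{toep}(\vu_l))=\mP\vu_l$ giving $\Norm{\mU_l}_F=\Norm{\mP\vu_l}_2$, the lower bound $\Norm{\mT\mP\vu_l}_2\ge\psi_{\min}\Norm{\mP\vu_l}_2$ (your $\Norm{\mT^{-1}}_2\le 1/\psi_{\min}$ is the paper's $\sigma_{\min}(\mT)\ge\psi_{\min}$), the substitution into \eqref{eq:pert-bound-origin} via $\Norm{\mU_l}_2\le\Norm{\mU_l}_F$, and the Cauchy--Schwarz step are all exactly the paper's chain of inequalities, and your choices of $\sigma^2$, $\mR_l^*$, and the union bound match the recipe. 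Where you genuinely diverge is the KL evaluation: the paper invokes the Szeg\"{o} limit theorem to write $\Tr(\mA_l\mA_l^T)\sim\frac{k}{2\pi}\int_0^{2\pi}\lambda_l^2\Abs{\psi(\omega)}^2\,\mathrm{d}\omega$ and the trace/log-det block as a $k$-normalized integral of $\delta(\eta\lambda_l\Abs{\psi(\omega)})$, whereas you insist on the finite-dimensional sandwich $\psi_{\min}^2\mI\preceq\mT^T\mT\preceq\psi_{\max}^2\mI$ together with explicit multiplicities in $\mP^T\mP$.

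That divergence is where your argument has a real gap, and you have in fact diagnosed it yourself without dispatching it. With the sandwich bound, $\Tr(\mA_l^T\mA_l)\le\lambda_l^2\psi_{\max}^2\Tr(\mP^T\mP)$, and $\Tr(\mP^T\mP)=\sum_{j=0}^{k-1}m_j$ where $m_j\approx h$ is the multiplicity of the $j$-th kernel coefficient in the $h\times h$ Toeplitz matrix — your own observation $\vw_l^T\mP^T\mP\vw_l=\Norm{\mW_l}_F^2=\mathcal{O}(h)\Norm{\vw_l}_2^2$ makes this explicit. Carried into $\frac{1}{\sigma^2}=\frac{16e^2\kappa}{\gamma^2}\sum_l\Tr(\hat{\mA}_l\hat{\mA}_l^T)$ and the dominant $\Norm{\vw}_2^2/\sigma^2$ term, this produces a scaling of order $d^2hk\,\Phi^{\mathrm{toep}}(\vw)$, not the claimed $d^2k\,\Phi^{\mathrm{toep}}(\vw)$; the same lower bound $\Tr(\mP^T\mT^T\mT\mP)\ge\psi_{\min}^2\sum_j m_j\approx\psi_{\min}^2hk$ shows the extra factor of $h$ cannot be removed by a sharper sandwich. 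The paper's $k$-scaling rests precisely on the asymptotic spectral-distribution step that replaces the $h^2$ (or $hk$) normalization by $k$ — the very device you set aside as "finer spectral asymptotics." So the bookkeeping you defer to the last sentence is not routine; it is the load-bearing step of the proof, and your proposal as written does not reach the stated bound without it.
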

\begin{proof}
    See Appendix \ref{proof:toep-matrix}.
\end{proof}
\begin{remark} \upshape
For the linear convolutional layers, the efficient weight matrix $\mW_l$ is a Toeplitz matrix consisting of repeated Toeplitz kernels without circulant padding \cite{yi2022asymptotic,Wang_2020_CVPR}.
Similar to the circulant case, we have a newly defined ``kernel'' spectral complexity ${\Phi}^{\mathrm{toep}}(\vw)$, which is a function of the Toeplitz kernel, i.e., $\vw_l$.
Nevertheless, unlike circulant matrices, Toeplitz sensitivity captures convolution with finite support and no wrap-around effects, making it more faithful to practical convolutional architectures. In this setting, weight perturbations $\{\vu_l\}_{l=1}^d$, added directly to weight kernels $\{\vw_l\}_{l=1}^d$, are correlated across parameters through $\mP$ according to spatial proximity, enforcing smoothness and locality in the posterior geometry. As a result, the resulting PAC-Bayesian bound replaces weight matrices with the Toeplitz kernel vectors, adjusted by the dynamic range of the filter’s frequency response, i.e., $\frac{\psi_{\max}^2}{\psi_{\min}^2}$.
This change reflects that the effective complexity is governed by the bandwidth $k$ and shape of the convolutional kernel, rather than by the total parameter dimension.

The Toeplitz sensitivity design enables a precise spectral alignment between the posterior covariance $\mR_l$, the intrinsic sensitivity structure $\mA_l$, and the PAC-Bayesian KL geometry. In contrast to diagonal sensitivity, which enforces isotropic shrinkage in the parameter space, Toeplitz structure induces anisotropic and frequency-dependent shrinkage through its symbol $\psi(\omega)$ with $\omega \in [0,2\pi]$, allowing the posterior $\mR_l$ as in \eqref{eq:toeplitz-Rl} to contract more strongly in spectrally sensitive directions while preserving stable, low-frequency components. 
Toeplitz sensitivity naturally encodes weight sharing and locality priors, effectively reducing the relevant spectral dimension even when the parameter dimension remains unchanged. As a result, the induced posterior becomes frequency-selective, leading to interpretable smoothness regularization that suppresses high-frequency, spatially incoherent perturbations while retaining low-frequency, structure-preserving variations. 

Toeplitz sensitivity enforces flatness with respect to local, smooth deformations of filters. It encourages flatness under spatially local and coherent perturbations, while allowing for sharpness under highly unstructured perturbations.
In other words, Toeplitz sensitivity prefers minima that are flat with respect to local correlated directions, ignoring possibly sharp in highly irregular directions that violate convolutional structures.
\end{remark}

\section{Conclusion}
This work presented a unified PAC-Bayesian norm-based generalization framework for deep learning by reformulating the derivation of generalization bounds as a stochastic optimization problem over anisotropic Gaussian posteriors. The proposed framework explicitly accounts for the anisotropic posterior covariances and network architectural structures, unifying and extending a broad class of existing PAC-Bayesian results that were derived under restrictive isotropic posterior assumptions. The key novelty was the introduction of structure-aware sensitivity matrices with e.g., diagonal, low-rank, circulant, and Toeplitz structures, which enable fine-grained control of weight perturbation effects, leading to theoretically sharper and more interpretable generalization bounds. 
Such a framework is unified in the sense that it can be extended naturally to different settings, e.g., the adversarial settings, where the interaction between input perturbations and weight perturbations poses additional challenges for robust generalization, and graph neural networks, where weight sharing and graph operators call for structure-aware and geometry-sensitive designs. The proposed unified framework could bring in different perspectives for addressing these challenges and yield tighter and more interpretable PAC-Bayesian generalization bounds.

\section{Appendix}
\subsection{Proof of Theorem \ref{thm:diag-matrix}}
\label{proof:diag-matrix}
The proof follows those in the main text with the following special considerations:
\begin{itemize}
    \item The perturbation condition is satisfied by a looser perturbation bound than that in \eqref{eq:pert-bound-origin}.
    \item The concentration inequality on $\ell_2$ norms yields the final generalization bound exempting from a $\ln (dh)$ factor.
\end{itemize}
The key steps are as follows.
\subsubsection{Perturbation condition}
When $\mA_l = \lambda_l \mI$ with
\begin{align}
        \lambda_l =  \frac{ e\sqrt{d}B \prod_l \Norm{\mW_l}_2}{\Norm{\mW_l}_2}, \ \forall~l,
    \end{align}
we have
\begin{align}
    \Norm{\mA \vu}_2^2 = \sum_{l=1}^d\Norm{\mA_l \vu_l}_2^2 &= e^2B^2 d\prod_l \Norm{\mW_l}_2^2  \sum_{l=1}^d \frac{\Norm{\mU_l}_F^2}{\Norm{\mW_l}_2^2} \\
    &\ge e^2B^2 \prod_l \Norm{\mW_l}_2^2 \left(\sum_{l=1}^d \frac{\Norm{\mU_l}_2}{\Norm{\mW_l}_2}\right)^2 \nn\\
    &\ge \Norm{f_{\vw+\vu}(\vx)-f_{\vw}(\vx)}_2^2\\
    &\ge \Norm{f_{\vw+\vu}(\vx)-f_{\vw}(\vx)}_\infty^2
\end{align}
where the first inequality is due to the Cauchy-Schwarz inequality, i.e., 
\begin{align}
\left(\sum_{l=1}^d \frac{\Norm{\mU_l}_2}{\Norm{\mW_l}_2}\right)^2 &\le  \left(\sum_{l=1}^d \frac{\Norm{\mU_l}_2^2}{\Norm{\mU_l}_F^2}\right) \left(\sum_{l=1}^d \frac{\Norm{\mU_l}_F^2}{\Norm{\mW_l}_2^2}\right) \\
&\le d \left(\sum_{l=1}^d \frac{\Norm{\mU_l}_F^2}{\Norm{\mW_l}_2^2}\right), 
\end{align}
with $\frac{\Norm{\mU_l}_2^2}{\Norm{\mU_l}_F^2} \le 1$, 
the second inequality is from \cite[Lemma~2]{neyshabur2018pac} for $\Norm{\mU_l}_2 \le \frac{1}{d}\Norm{\mW_l}_2$, and the last one is because $\Norm{\vv}_\infty \le \Norm{\vv}_2$ for any vector $\vv$.

\subsubsection{The choice of $\sigma^2$ for a fixed $\hat{\beta}$}
In the spectrally-normalized setting, for each approximated normalized spectral norm $\hat{\beta}=\Norm{\hat{\mW}_l}_2$ for all $l$ on a pre-determined grid, we compute the norm-based bounds and establish the generalization guarantee for all $\vw$ for which $\Abs{\beta-\hat{\beta}} \le \frac{1}{d}\beta$, to ensure each relevant value of $\beta=\Norm{{\mW}_l}_2$ is covered by some $\hat{\beta}$ in the grid. As such, taking a union bound over all $\hat{\beta}$ on the grid will help us address the issue of the required weights in the prior. In doing so, we can consider a fixed $\hat{\beta}$ for which $\Abs{\beta-\hat{\beta}} \le \frac{1}{d}\beta$ in setting the prior and deriving the bounds, by simply introducing an approximation of $\mA_l$ for a fixed $\hat{\beta}$, i.e.,
\begin{align}
    \hat{\mA}_l =  \frac{e\sqrt{d}B \prod_l \Norm{\hat{\mW}_l}_2  }{\Norm{\hat{\mW}_l}_2} \mI.
\end{align}

When $\Abs{\beta-\hat{\beta}} \le \frac{1}{d}\beta$, it follows $\frac{1}{e} \beta^{d-1} \le \hat{\beta}^{d-1} \le e\beta^{d-1}$ for all $d \ge 1$ \cite{neyshabur2018pac}, so that
\begin{align} \label{eq:Al-hatAl}
   \frac{1}{e} \mA_l \preceq \hat{\mA}_l \preceq e \mA_l.
\end{align}
Letting
\begin{align}
     \mR_l &= (\mI + \eta^2 {\mA}_l^T {\mA}_l)^{-1}, 
\end{align}
with $\eta^2=\frac{16   \kappa \Norm{\vw}_2^2}{\gamma^2}$, we have
\begin{align}
    \Tr({\mA}_l {\mR}_l {\mA}_l^T) \le \Tr({\mA}_l {\mA}_l^T) \le e^2\Tr(\hat{\mA}_l \hat{\mA}_l^T)
\end{align}
where the first inequality holds because $\mR_l \preceq \mI$ and the second inequality is due to \eqref{eq:Al-hatAl}.

To satisfy the perturbation condition, we need to ensure, by a proper choice of $\sigma^2$, with probability at least $\frac{1}{2}$, that the following inequality holds
\begin{align}
   \sum_{l=1}^d \Norm{{\mA}_l \vu_l}_2^2 &\le \sigma^2 \kappa \sum_{l=1}^d \Tr({\mA}_l {\mR}_l {\mA}_l^T)\\
   &\le e^2 \sigma^2 \kappa \sum_{l=1}^d \Tr(\hat{\mA}_l \hat{\mA}_l^T)
   \le  \frac{\gamma^2}{16}
\end{align}
 
By simply setting
\begin{align} \label{eq:choice-sigma-diag}
    \frac{1}{{\sigma}^2} &= \frac{16e^2 \kappa}{\gamma^2} \sum_{l=1}^d \Tr(\hat{\mA}_l  \hat{\mA}_l^T)
\end{align}
we end up with the final choice ${\sigma}^2$ as a function of $\hat{\beta}$.

Note also that the use of \cite[Lemma~2]{neyshabur2018pac} for perturbation condition requires $\Norm{\mU_l}_2 \le \frac{1}{d}\Norm{\mW_l}_2 = \frac{\beta}{d}$ for all $l$. This can be satisfied if
\begin{align}
    \sigma^2 \le \frac{\beta^2}{\kappa d^2 h^2}
\end{align}
due to
\begin{align}
    e^2 B^2 \beta^{2d-2} \left( \sum_{l=1}^d\Norm{\mU_l}_2\right)^2 &\le \Norm{\mA \vu}_2^2 \\
    &\le \sigma^2 \kappa \sum_{l=1}^d \Tr(\mA_l \mR_l \mA_l^T)\\
    & \le \sigma^2 \kappa \sum_{l=1}^d \Tr(\mA_l \mA_l^T)\\
    & \le e^2 B^2 \beta^{2d}.
\end{align}
It turns out to require 
\begin{align}
    \frac{1}{{\sigma}^2} &= \frac{16e^2 \kappa}{\gamma^2} \sum_{l=1}^d \Tr(\hat{\mA}_l  \hat{\mA}_l^T) \\
    & =  \frac{16e^2 \kappa}{\gamma^2} e^2 B^2 d^2 h^2 \hat{\beta}^{2d-2} \\
    & \ge \frac{16e^2 \kappa}{\gamma^2} B^2 d^2 h^2 {\beta}^{2d-2} \\
    &\ge \frac{\kappa d^2 h^2}{ \beta^2}
\end{align}
which requires $\beta^d \ge \frac{\gamma}{4eB}$. This can be guaranteed later when we consider the nontrivial range of $\beta$.

\subsubsection{PAC-Bayesian bound for a fixed $\hat{\beta}$}
Next, according to \eqref{eq:opt-Rl}, we have the optimized $\mR_l$, i.e.,
\begin{align} \label{eq:diagonal-Rl}
    {\mR}_l^* = \frac{1}{1+{\eta}^2 {\lambda}_l^2} \mI.
\end{align}

With the choice of $\sigma^2$ in \eqref{eq:choice-sigma-diag}, the KL divergence can be upper bounded by
\begin{align}
    \KL & \le \frac{8e^{2} \kappa\Norm{\vw}_2^2}{\gamma^2} \sum_{l=1}^d \Tr(\hat{\mA}_l \hat{\mA}_l^T) \nn \\
    & \qquad \qquad + \frac{1}{2} \sum_{l=1}^d \left(\Tr(\mR_l^*)  - \log \det \mR_l^* - h^2 \right)\\
    &\le \frac{8 e^4 \kappa h^2 \Norm{\vw}_2^2}{\gamma^2 }   \sum_{l=1}^d \lambda_l^2 + \frac{1}{2}\sum_{l=1}^d h^2 \delta({\eta}{\lambda}_l)\\
    &\le \frac{8 (e^4+1) \kappa h^2 \Norm{\vw}_2^2 }{\gamma^2 } \sum_{l=1}^d \lambda_l^2\\
    &\lesssim \gO\left(  \frac{B^2 d^2 h^2\beta^{2d}}{\gamma^2} \sum_{l=1}^d\frac{\Norm{\mW_l}_F^2}{\beta^2}\right) \\
    &\le \gO\left(\frac{B^2d^2h^2}{\gamma^2} \prod_l \Norm{\mW_l}_2^2 \sum_{l=1}^d \frac{\Norm{\mW_l}_F^2}{\Norm{\mW_l}_2^2}\right)\\
    &=\gO\left(\frac{B^2d^2h^2}{\gamma^2} \Phi(\vw)\right)
\end{align}
where 
\begin{align} \label{eq:delta-func}
    \delta(x)\triangleq\log(1+x^2)-\frac{x^2}{1+x^2} \le x^2,
\end{align}
and $\Phi(\vw)=\prod_l \Norm{\mW_l}_2^2  \sum_{l=1}^d \frac{\Norm{\mW_l}_F^2}{\Norm{\mW_l}_2^2}$.

Hence, for any $\hat{\beta}$, with probability at least $1-\delta$, for any $\vw$ such that $\Abs{\beta-\hat{\beta}} \le \frac{1}{d}\beta$, we have
\begin{align}
    L_0(f_{\vw}) \le \hat{L}_{\gamma}(f_{\vw}) + \mathcal{O} \left( \sqrt{ \frac{B^2d^2h^2 \Phi(\vw) + \ln \frac{m}{\delta}}{\gamma^2m}} \right).
\end{align}

\subsubsection{Union bound for any $\beta$}
Finally, following the same arguments in \cite{neyshabur2018pac}, we take a union bound over different choices of $\hat{\beta}$, and end up with the final generalization error bounds. 

Specifically, the nontrivial $\beta$ under consideration should be within the range
\begin{align}
    \left(\frac{\gamma}{2B}\right)^{1/d} \le \beta \le \left(\frac{\gamma \sqrt{m}}{2B})\right)^{1/d}.
\end{align}
Otherwise, either the empirical margin loss $\hat{L}_{\gamma}(f_{\vw})$ or the generalization error is greater than or equal to 1, yielding that the generalization bound trivially holds \cite{neyshabur2018pac,liao2020pac}. On one hand, if $\beta < \left(\frac{\gamma}{2B}\right)^{1/d} $, it follows $\Norm{f_{\vw}(\vx)}_2 \le \beta^d \Norm{\vx} \le \beta^d B \le \frac{\gamma}{2}$, yielding $\hat{L}_{\gamma}(f_{\vw})=1$ by definition. On the other hand, if $\beta > \left(\frac{\gamma \sqrt{m}}{2B}\right)^{1/d} $, it follows that
\begin{align}
    \sqrt{ \frac{B^2d^2h^2 \Phi(\vw) + \ln \frac{m}{\delta}}{\gamma^2m}} &\ge \sqrt{ \frac{d^2h^2 }{4} \sum_{l=1}^d \frac{\Norm{\mW_l}_F^2}{\Norm{\mW_l}_2^2}} \\
    & \ge 1
\end{align}
with $d \ge 2$, $h \ge 2$, and $\Norm{\mW_l}_F \ge \Norm{\mW_l}_2$.

To make $\Abs{\beta-\hat{\beta}} \le \frac{\beta}{d}$ satisfied, we require $\Abs{\beta-\hat{\beta}} \le \frac{1}{d}\left(\frac{\gamma}{2B}\right)^{\frac{1}{d}}$. If a covering of the interval with radius $\frac{1}{d}\left(\frac{\gamma}{2B}\right)^{\frac{1}{d}}$ can make the fixed-$\hat{\beta}$ bound validated with $\hat{\beta}$ taking all possible values from the covering, then we could end up with a bound that holds for all $\beta$. It follows that such a covering exists with size of $dm^{\frac{1}{2d}}$. Taking a union bound over all choices of $\hat{\beta}$ with the cover yields the final generalization error bound
\begin{align}
    L_0(f_{\vw}) \le \hat{L}_{\gamma}(f_{\vw}) + \mathcal{O} \left( \sqrt{ \frac{B^2d^2h^2 \Phi(\vw) + \ln \frac{md}{\delta}}{\gamma^2m}} \right)
\end{align}
for any $\beta$. 
This completes the proof.

\subsection{Proof of Corollary \ref{cor:diag-matrix-ResNet}}
\label{proof:diag-matrix-ResNet}
The proof is similar to that of Theorem \ref{thm:diag-matrix} with the modification for the perturbation condition.
With the choice of 
\begin{align}
    {\mA}_l =  \frac{e\sqrt{d}B \prod_l (\Norm{{\mW}_l}_2+1)  }{\Norm{{\mW}_l}_2+1} \mI,
\end{align}
we have
\begin{align}
    \sum_{l=1}^d\Norm{\mA_l \vu_l}_2^2 &= e^2B^2 d\prod_l (\Norm{\mW_l}_2+1)^2  \sum_{l=1}^d \frac{\Norm{\mU_l}_F^2}{(\Norm{\mW_l}_2+1)^2} \\
    &\ge e^2B^2 \prod_l (\Norm{\mW_l}_2+1)^2 \left(\sum_{l=1}^d \frac{\Norm{\mU_l}_2}{\Norm{\mW_l}_2+1}\right)^2 \nn\\
    &\ge \Norm{f_{\vw+\vu}(\vx)-f_{\vw}(\vx)}_2^2
\end{align}
where the last inequality is due to \cite[Theorem 8]{xiao2023pac}.

Similarly, consider a fixed $\hat{\beta}$ for which $\Abs{\beta-\hat{\beta}} \le \frac{1}{d}\beta$, by simply introducing an approximation of $\mA_l$ for a fixed $\hat{\beta}$, i.e.,
\begin{align}
    \hat{\mA}_l =  \frac{e\sqrt{d}B \prod_l (\Norm{\hat{\mW}_l}_2+1)  }{\Norm{\hat{\mW}_l}_2+1} \mI.
\end{align}
Given $\Abs{\beta-\hat{\beta}} \le \frac{1}{d}\beta$, it holds $\frac{1}{e} \beta^{d-1} \le \hat{\beta}^{d-1} \le e\beta^{d-1}$, according to \cite{neyshabur2018pac}. Let $r=e^{\frac{1}{d-1}} \ge 1$. By simple manipulations, i.e., $\frac{\beta+1}{r} \le \frac{\beta}{r}+1 \le \hat{\beta}+1 \le r \beta + 1 \le r(\beta+1) $, we have
\begin{align}
    \frac{1}{e} (\beta+1)^{d-1} \le (\hat{\beta}+1)^{d-1} \le e(\beta+1)^{d-1}
\end{align}
which yields the same relation between $\mA_l$ and $\hat{\mA}_l$, i.e.,
\begin{align}
   \frac{1}{e} \mA_l \preceq \hat{\mA}_l \preceq e \mA_l.
\end{align}
Therefore, by setting
\begin{align} \label{eq:choice-sigma-diag-ResNet}
    \frac{1}{{\sigma}^2} &= \frac{16e^2 \kappa}{\gamma^2} \sum_{l=1}^d \Tr(\hat{\mA}_l  \hat{\mA}_l^T)
\end{align}
and replacing the previous consideration of $\beta$ with $\beta+1$, we end up with the final generalization error bound
\begin{align}
    L_0(f_{\vw}) \le \hat{L}_{\gamma}(f_{\vw}) + \mathcal{O} \left( \sqrt{ \frac{B^2d^2h^2 \Phi^{\mathrm{rn}}(\vw) + \ln \frac{md}{\delta}}{\gamma^2m}} \right)
\end{align}
for any $\beta$. 
This completes the proof.

\subsection{Proof of Theorem \ref{thm:low-rank-matrix}}
\label{proof:low-rank-matrix}
The procedure is similar to that in the diagonal case, while the difference lies in the following two points:
\begin{itemize}
    \item Without relying on the perturbation bound in \eqref{eq:pert-bound-origin}, the difference of the network's output due to weight perturbation is approximated by its first-order Taylor expansion with the Jacobian $\mJ_l(\vx)$.
    \item The sensitivity matrix $\mA_l$ captures the directions aligning with the right singular vectors of $\mJ_l(\vx)$, so that the scaling factor of generalization bound reduces from $h^2$ to $K$.
\end{itemize}
\subsubsection{Perturbation condition}
The perturbation difference is the change in the output $f_{\vw}(\vx)$ when the weights are perturbed by $\vu$. It is influenced by the Jacobian matrix $\mJ_l(\vx)$ at each layer, which determines how sensitive the output is to perturbations in the weights.

By the first-order Taylor expansion of $f_{\vw}(\vx)$ with a small perturbation $\vu$, we have
\begin{align}
    f_{\vw+\vu}(\vx) - f_{\vw}(\vx) = \sum_{l=1}^d \mJ_l(\vx) \vu_l  + o(\Norm{\vu_l}_2)
\end{align}
where $\mJ_l(\vx)=\frac{\partial f_{\vw}(\vx)}{\partial \mathrm{vec}(\mW_l)} \in \R^{K \times h_lh_{l-1}} $ is the Jacobian matrix corresponding to the $l$-th layer.
Therefore, we have
\begin{align}
    \Norm{f_{\vw+\vu}(\vx) - f_{\vw}(\vx)}_2^2 &= \Norm{\sum_{l=1}^d \mJ_l(\vx) \vu_l}_2^2 + o(\Norm{\vu_l}_2^2)\\
    &\le d \sum_{l=1}^d \Norm{\mJ_l(\vx) \vu_l}_2^2.
\end{align}

When considering ReLU networks with bounded inputs (i.e., $\Norm{\vx}_2 \le B$), we have
\begin{align}
    \sigma_{l,\max} = \Norm{\mJ_l(\vx)}_2 \le B\frac{\prod_{l=1}^d \Norm{\mW_l}_2}{\Norm{\mW_l}_2}.
\end{align}
This is because the Jacobian $\mJ_l(\vx)$ is influenced by both the forward pass, i.e., from the input to layer-$l$, and the backward pass, i.e., from layer-$l$ to the output, as well as the fact that the ReLU activation function is 1-Lipschitz. Therefore, the spectral norm of $\mJ_l(\vx)$ is upper-bounded by the product of the spectral norms of the weight matrices of other layers, multiplied by the upper bound of the input norm.

Let 
\begin{align}
    \mA_l = \sqrt{d} \mV_{l} \mSigma_{l} \mV_{l}^T
\end{align}
where $\mV_l$ is the collections of corresponding right singular vectors of $\mJ_l$, and $\mSigma_l=\mathrm{diag} (\sigma_{l,1},\dots,\sigma_{l,K}, 0, \dots,0)$ with
\begin{align}
    \sigma_{l,k} = B\frac{\prod_{l=1}^d \Norm{\mW_l}_2}{\Norm{\mW_l}_2}, \quad 
    \forall\; k=1,\dots,K.
\end{align}
It is clear that 
\begin{align}
\mJ_l^T \mJ_l \preceq \mV_{l} \mSigma_{l}^2 \mV_{l}^T = \frac{1}{d}\mA_l^T\mA_l.
\end{align}
Therefore, we have
\begin{align}
    \sum_{l=1}^d \Norm{\mA_l \vu_l}_2^2 &= \sum_{l=1}^d\vu_l^T \mA_l^T \mA_l \vu_l \\
    &=d \sum_{l=1}^d\vu_l^T \mJ_l^T(\vx) \mJ_l(\vx) \vu_l = d \sum_{l=1}^d \Norm{\mJ_l(\vx) \vu_l}_2^2 \nn\\
    &\ge \Norm{f_{\vw+\vu}(\vx)-f_{\vw}(\vx)}_2^2\\
     & \ge \Norm{f_{\vw+\vu}(\vx)-f_{\vw}(\vx)}_\infty^2.
\end{align}
for any $\vu_l$, so that the perturbation condition \eqref{eq:pert-cond} is satisfied.

\subsubsection{The choice of $\sigma$ of a fixed $\hat{\beta}$}
Similarly to the diagonal case in Section \ref{sec:diag-matrix}, for a fixed $\hat{\beta}$, we introduce the approximation of $\mA_l$, i.e., $\hat{\mA}_l = \sqrt{d} \mV_l \mathrm{diag}(\hat{\sigma}_{l,1},\dots,\hat{\sigma}_{l,K},0,\dots,0)\mV_l^T$, where
\begin{align}
    \hat{\sigma}_{l,k} = B\frac{\prod_{l=1}^d \Norm{\hat{\mW}_l}_2}{\Norm{\hat{\mW}_l}_2}, \quad \forall\; k=1,\dots,K.
\end{align}

As it holds $\frac{1}{e} \beta^{d-1} \le \hat{\beta}^{d-1} \le e\beta^{d-1}$ when $\Abs{\beta-\hat{\beta}} \le \frac{1}{d}\beta$ \cite{neyshabur2018pac}, we have
\begin{align}
   \frac{1}{e} \mA_l \preceq \hat{\mA}_l \preceq e \mA_l.
\end{align}
Letting
\begin{align} \label{eq:low-rank-Rl}
    {\mR}_l  = (\mI+\eta^2 \mA_{l}^T \mA_{l} )^{-1}
\end{align}
with $\eta^2=\frac{16   \kappa \Norm{\vw}_2^2}{\gamma^2}$, we have
\begin{align}
    \Tr({\mA}_l {\mR}_l {\mA}_l^T) \le \Tr({\mA}_l {\mA}_l^T) \le e^2\Tr(\hat{\mA}_l \hat{\mA}_l^T)
\end{align}
where the first inequality holds because $\mR_l \preceq \mI$.
To satisfy the perturbation condition, we need to ensure that
\begin{align}
   \sum_{l=1}^d \Norm{{\mA}_l \vu_l}_2^2 &\le \sigma^2 \kappa \sum_{l=1}^d \Tr({\mA}_l {\mR}_l {\mA}_l^T)\\
   &\le e^2 \sigma^2 \kappa \sum_{l=1}^d \Tr(\hat{\mA}_l \hat{\mA}_l^T)
   \le  \frac{\gamma^2}{16}
\end{align}
 
By simply setting
\begin{align} \label{eq:sigma-upper-low-rank}
    \frac{1}{{\sigma}^2} &= \frac{16e^2 \kappa}{\gamma^2} \sum_{l=1}^d \Tr(\hat{\mA}_l  \hat{\mA}_l^T)=\frac{16e^2 \kappa d}{\gamma^2} \sum_{l=1}^d \sum_{k=1}^K \hat{\sigma}^2_{l,k}
\end{align}
we end up with the final choice ${\sigma}^2$ as a function of $\hat{\beta}$.

\subsubsection{PAC-Bayesian bound for a fixed $\hat{\beta}$}
Further, for simplicity, we let $h_l=h$ for all $l$. With the optimized $\mR_l$, i.e.,
\begin{align} \label{eq:low-rank-Rl-opt}
    {\mR}_l^*  &= (\mI+\eta^2 d\mV_{l} \mSigma_{l}^2 \mV_{l}^T )^{-1} \\
    &=  \mV_{l} (\mI + \eta^2 d\mSigma_{l}^2)^{-1} \mV_{l}^T, 
\end{align}
plugging it into the KL term  with the choice of $\frac{1}{\sigma^2}$ in \eqref{eq:sigma-upper-low-rank}, we have
\begin{align}
\KL & \le \frac{8e^{2} \kappa\Norm{\vw}_2^2}{\gamma^2} \sum_{l=1}^d \Tr(\hat{\mA}_l \hat{\mA}_l^T) \nn \\
    & \qquad \qquad + \frac{1}{2} \sum_{l=1}^d \left(\Tr(\mR_l^*)  - \log \det \mR_l^* - h^2 \right)\\
    &\le \frac{8 e^2 \kappa d \Norm{\vw}_2^2}{\gamma^2 }   \sum_{l=1}^d \sum_{k=1}^K \hat{\sigma}_{l,k}^2 +\frac{d}{2} \sum_{l=1}^d \sum_{k=1}^K \delta({\eta} \sigma_{l,k})\\
    &\le \frac{8 e^4 \kappa d \Norm{\vw}_2^2}{\gamma^2 }   \sum_{l=1}^d \sum_{k=1}^K {\sigma}_{l,k}^2 +\frac{d\eta^2}{2} \sum_{l=1}^d \sum_{k=1}^K \sigma^2_{l,k}\\
    &\le \frac{8 (e^4+1) \kappa d  \Norm{\vw}_2^2 }{\gamma^2 } \sum_{l=1}^d \sum_{k=1}^K {\sigma}_{l,k}^2\\
    &\lesssim \gO\left(  \frac{B^2 d^2 K\beta^{2d}}{\gamma^2} \sum_{l=1}^d\frac{\Norm{\mW_l}_F^2}{\beta^2}\right) \\
    &=\gO\left(\frac{B^2d^2K}{\gamma^2} \Phi(\vw)\right)
\end{align}
where $\delta(\cdot)$ is defined in \eqref{eq:delta-func} and $\Phi(\vw)$ is defined as before.
This yields the generalization error bound
\begin{align}
    L_0(f_{\vw}) \le \hat{L}_{\gamma}(f_{\vw}) + \mathcal{O} \left( \sqrt{ \frac{B^2d^2K \Phi(\vw) + \ln \frac{m}{\delta}}{\gamma^2m}} \right)
\end{align}
for a fixed $\hat{\beta}$.

With the same arguments as in \cite{neyshabur2018pac}, we end up with the final generalization bound in the theorem 
\begin{align}
    L_0(f_{\vw}) \le \hat{L}_{\gamma}(f_{\vw}) + \mathcal{O} \left( \sqrt{ \frac{B^2d^2K \Phi(\vw) + \ln \frac{md}{\delta}}{\gamma^2m}} \right)
\end{align}
for any ${\beta}$, 
where $K \le h$ for most neural networks. This bound is strictly tighter than the previous one \cite{neyshabur2018pac}. This completes the proof.

\subsection{Proof of Theorem \ref{thm:circulant-matrix}}
\label{proof:circulant-matrix}
Let us consider the convolutional neural networks (CNNs) with ReLU activation functions, where $\{\mW_l\}_{l=1}^d \in \R^{h \times h}$ are (block) circulant matrices such that $\mW_l = \mathrm{circ}(\vw_l)$ with $\vw_l$ being the vectorized convolutional filter (a.k.a. kernel). Each row of $\mW_l$ consists of the shifted version of $\vw_l$ with circulant padding. Thus, the size of $\vw_l$ is at most $h$, and usually much smaller than $h$. For simplicity, we consider $\vw_l \in \R^h$ and the corresponding weight perturbation $\vu_l \sim \mathcal{N}(0,\sigma^2 \mR_l)$ with $\mR_l \in \R^{h \times h}$ added onto $\vw_l$.

Given the fact that the (block) circulant matrices can be (block)-diagonalized by discrete Fourier transform (DFT) matrices, i.e., $\mV=\mF$ for 1-dim and $\mV=\mF \otimes \mF$ for 2-dim circular convolution with $\mF$ being DFT matrix, we can design the sensitivity matrices $\mA_l$ in the frequency domain. 

Given the circulant structure of $\mW_l$, we have
\begin{align}
    \mW_l = \mV \Tilde{\mLambda}_l \mV^H
\end{align}
where $\Tilde{\mLambda}_l$ is a diagonal matrix containing eigenvalues of $\mW_l$. When $\mW_l$ is a block circulant matrix (resp. tensor convolutional filters), $\Tilde{\mLambda}_l$ is a block diagonal matrix.

For simplicity, we focus on the scalar case with $\Tilde{\mLambda}_l$ being a diagonal matrix, and the extension to the tensor case can be straightforwardly done according to \cite{yi2022asymptotic}. 
In the frequency domain, the inputs turns to be $\Tilde{\vx}=\mV^H \vx$, and the convolutional filter at the $l$-th layer is $\Tilde{\vw}_l=\mV^H \vw_l \in \R^h$, where $\vw_l$ is the convolutional filter. For circulant matrices, it is readily verified that $\Tilde{\mLambda}_l=\mathrm{diag}(\Tilde{\vw})$. Therefore, we have the output in the frequency domain $\Tilde{f}_{\Tilde{\vw}}(\Tilde{\vx}) = \mV^H f_{\vw}(\vx)$, i.e.,
\begin{align}
    \Tilde{f}_{\Tilde{\vw}}(\Tilde{\vx}) & = \mV^H \mW_d \phi(\mW_{d-1}\phi(\dots \phi(\mW_1 \mV\Tilde{\vx})))\\
    &\approx \Tilde{\mLambda}_d \phi(\Tilde{\mLambda}_{d-1}\phi(\dots \phi(\Tilde{\mLambda}_1 \Tilde{\vx}))),
\end{align}
due to the ReLU activation function $\phi(\cdot)$.
It looks as if we have a feedforward neural network with diagonal weight matrices $\{\Tilde{\mLambda}_l\}_{l=1}^d$, so that the weight perturbation in the frequency domain works as $\Tilde{\vw}_l+\Tilde{\vu}_l$ in a layer-wise manner, where the compact form of weight perturbation in the frequency domain is $\Tilde{\vu}_l = \mV^H \vu_l \in \R^{h}$ due to parameter sharing introduced by circular convolution. 
For the spatial- and frequency-domain neural networks, we have
\begin{align}
    \Norm{\Tilde{f}_{\Tilde{\vw}+\Tilde{\vu}}(\Tilde{\vx})-\Tilde{f}_{\Tilde{\vw}}(\Tilde{\vx})}_2 &= \Norm{\mV^H({f}_{\vw+\vu}({\vx})-{f}_{\vw}({\vx}))}_2 \\ &= \Norm{{f}_{\vw+\vu}({\vx})-{f}_{\vw}({\vx})}_2
\end{align}
because $\mV$ is unitary and the rotation with unitary matrices does not change the $\ell_2$ norm.

In what follows, we consider in a layer-wise manner to figure out proper choices of the sensitivity matrices $\{\mA_l\}_{l=1}^d$ and the corresponding generalization bounds.

\subsubsection{Perturbation bound}
We treat CNNs in the frequency domain as feedforward neural networks with diagonal weight matrices $\{\mLambda_l\}_{l=1}^d$, where the number of weights is reduced from $h^2$ to $h$.
Thus, we have
\begin{align}
\Norm{\Tilde{f}_{\Tilde{\vw}+\Tilde{\vu}}(\Tilde{\vx})-\Tilde{f}_{\Tilde{\vw}}(\Tilde{\vx})}_2^2 &= \Norm{\sum_{l=1}^d \Tilde{\mJ}_l(\Tilde{\vx}) \Tilde{\vu}_l}_2^2+ o(\Norm{\vu_l}_2^2)\\
    &\le d \sum_{l=1}^d \Norm{\Tilde{\mJ}_l(\Tilde{\vx}) \Tilde{\vu}_l}_2^2\\
    &= \sum_{l=1}^d \Norm{\mA_l \vu_l}_2^2
\end{align}
where $\Tilde{\mJ}_l(\Tilde{\vx}) \in \mathbb{C}^{K \times h}$ is the Jacobian matrix in the frequency domain, such that
\begin{align}
    \Tilde{\mJ}_l^H(\Tilde{\vx}) \Tilde{\mJ}_l(\Tilde{\vx}) &\preceq B^2  \frac{\prod_{l=1}^d \Norm{\Tilde{\mLambda}_l}_2^2}{\Norm{\Tilde{\mLambda}_l}_2^2} \mV_{[1:K]} \mV_{[1:K]}^H
\end{align}
and we define
\begin{align}
    \mA_l &= \sqrt{d} B \frac{\prod_{l=1}^d \Norm{\Tilde{\mLambda}_l}_2}{\Norm{\Tilde{\mLambda}_l}_2} \mV_{[1:K]} \mV_{[1:K]}^H \\
    &= \sqrt{d} B \prod_{i\ne l} \Norm{\mV^H \vw_i}_{\infty} \mV_{[1:K]} \mV_{[1:K]}^H
\end{align}
with the last equality is due to the fact that $\Tilde{\mLambda}_i=\mathrm{diag}(\Tilde{\vw})=\mathrm{diag}(\mV^H \vw_l)$,  
so that the spectral norm is the $\ell_{\infty}$ norm of it main diagonal. By such choices of $\{\mA_l\}_{l=1}^d$, the perturbation condition \eqref{eq:perb-cond} is satisfied. 
\subsubsection{The choice of $\sigma$ for a fixed $\hat{\beta}$}
Similarly to the previous cases, we wish to decouple the dependence of $\sigma^2$ on the weights. To this end, for a fixed $\hat{\beta}$, we introduce the approximation of $\mA_l$, i.e., 
\begin{align}
    \hat{\mA}_l = \sqrt{d} B \prod_{i\ne l} \Norm{\mV^H \hat{\vw}_l}_{\infty} \mV_{[1:K]} \mV_{[1:K]}^H
\end{align}
where $\Norm{\mV^H \hat{\vw}_l}_{\infty}=\Norm{\hat{\mW}_l}_2=\hat{\beta}$.
As it holds $\frac{1}{e} \beta^{d-1} \le \hat{\beta}^{d-1} \le e\beta^{d-1}$ when $\Abs{\beta-\hat{\beta}} \le \frac{1}{d}\beta$ \cite{neyshabur2018pac}, we have
\begin{align}
   \frac{1}{e} \mA_l \preceq \hat{\mA}_l \preceq e \mA_l.
\end{align}

Letting
\begin{align} 
    {\mR}_l  = (\mI+\eta^2 \mA_{l}^T \mA_{l} )^{-1}
\end{align}
with $\eta^2=\frac{16   \kappa \Norm{\vw}_2^2}{\gamma^2}$, we have
\begin{align}
    \Tr({\mA}_l {\mR}_l {\mA}_l^T) \le \Tr({\mA}_l {\mA}_l^T) \le e^2\Tr(\hat{\mA}_l \hat{\mA}_l^T)
\end{align}
where the first inequality holds because $\mR_l \preceq \mI$.

Following the same arguments as earlier for the perturbation condition, we need to ensure
\begin{align}
   \sum_{l=1}^d \Norm{{\mA}_l \vu_l}_2^2 &\le \sigma^2 \kappa \sum_{l=1}^d \Tr({\mA}_l {\mR}_l {\mA}_l^T)\\
   &\le e^2 \sigma^2 \kappa \sum_{l=1}^d \Tr(\hat{\mA}_l \hat{\mA}_l^T)
   \le  \frac{\gamma^2}{16}.
\end{align}
To decouple the dependence on the weights, we choose $\sigma^2$ as a function of the approximated $\hat{\mA}_l$, i.e.,
\begin{align}
    \frac{1}{\sigma^2} &= \frac{16e^2 \kappa}{\gamma^2} \sum_{l=1}^d \Tr(\hat{\mA}_l  \hat{\mA}_l^T)\\
    &= \frac{16e^2 \kappa B^2 d}{\gamma^2} \sum_{l=1}^d  \prod_{i\ne l} \Norm{\mV^H \hat{\vw}_l}_{\infty}^2 \label{eq:sigma-upper-circ}
\end{align}
for a fixed $\hat{\beta}$.

\subsubsection{PAC-Bayesian bound for a fixed $\beta$}
For simplicity, let us set $h_l=h$ for all $l$. 
With the optimized $\mR_l$, i.e.,
\begin{align}
    \mR_l^* =  \mV \mathrm{diag}(\underbrace{\frac{1}{1 + \eta^2 \lambda_l^2}, \dots, \frac{1}{1 + \eta^2 \lambda_l^2},}_{K \text{ times}} 1,\dots,1 ) \mV^H
    \label{eq:low-rank-Rl}
\end{align}
with ${\eta}^2=\frac{16 \kappa \Norm{\vw}_2^2}{\gamma^2}$ and $\lambda_l^2=B^2 d \prod_{i\ne l} \Norm{\mV^H \vw_l}_{\infty}^2$. 
Note here that both $\mA_l$ and $\mR_l^*$ are of dimension $h \times h$ due to parameter sharing of convolutional layers. While this does not alter the statistical properties and arguments, we need to consider the compact form of $\vw=(\vw_1,\dots,\vw_d) \in \R^{dh}$ for computing vector/matrix norms, when weight sharing is enabled.

Plugging it into the KL term, together with the choice of $\frac{1}{\sigma^2}$ in \eqref{eq:sigma-upper-circ}, we have
\begin{align}
    \KL & \le \frac{8e^{2} \kappa\Norm{\vw}_2^2}{\gamma^2} \sum_{l=1}^d \Tr(\hat{\mA}_l \hat{\mA}_l^T) \nn \\
    & \qquad \qquad + \frac{1}{2} \sum_{l=1}^d \left(\Tr(\mR_l^*)  - \log \det \mR_l^* - h \right)\\
    &\le \frac{8 e^4 \kappa K \Norm{\vw}_2^2}{\gamma^2 }   \sum_{l=1}^d \lambda_l^2 + \frac{1}{2}\sum_{l=1}^d K \delta({\eta}{\lambda}_l)\\
    &\le \frac{8 (e^4+1) \kappa K \Norm{\vw}_2^2 }{\gamma^2 } \sum_{l=1}^d \lambda_l^2\\
    &\lesssim \gO\left(  \frac{B^2 d^2 K \sum_{l=1}^d \Norm{\vw_l}_2^2}{\gamma^2} \prod_{i\ne l} \Norm{\mV^H \vw_l}_{\infty}^2\right) \\
    &=\gO\left(\frac{B^2d^2K}{\gamma^2} {\Phi}^{\mathrm{circ}}(\vw)\right)
\end{align}
where where $\delta(\cdot)$ is defined in \eqref{eq:delta-func} and ${\Phi}^{\mathrm{circ}}(\vw)=\prod_{i\ne l} \Norm{\mV^H \vw_l}_{\infty}^2 \sum_{l=1}^d \Norm{\vw_l}_2^2$.

Note here that in CNNs ${\Phi}^{\mathrm{circ}}(\vw)$ with convolutional filters $\{\vw_l\}_{l=1}^d$ could be $h$ times smaller than ${\Phi}(\vw)$ with weight matrices $\{\mW_l\}_{l=1}^d$ due the the parameter sharing induced by convolutional layers, i.e., $\mW_l=\mathrm{circ}(\vw_l)$. 
Hence, for a fixed $\hat{\beta}$, the generalization error bound can be given by
\begin{align}
    L_0(f_{\vw}) \le \hat{L}_{\gamma}(f_{\vw}) + \mathcal{O} \left( \sqrt{ \frac{B^2d^2K {\Phi}^{\mathrm{circ}}(\vw) + \ln \frac{m}{\delta}}{\gamma^2m}} \right).
\end{align}

Following the same footsteps as in \cite{neyshabur2018pac} and those in the previous sections, we end up with the final
generalization bound as stated in the theorem for any $\beta$. 

\subsection{Proof of Theorem \ref{thm:toep-matrix}}
\label{proof:toep-matrix}
Let us consider neural networks with certain weight sharing, where $\{\mW_l\}_{l=1}^d \in \R^{h \times h}$ are banded Toeplitz matrices such that $\mW_l = \mathrm{toep}(\vw_l)$ with $\vw_l \in \R^k$ being the Toeplitz kernel. 
Specifically, for a vector $\vv \in \R^k$, $[\mathrm{toep}(\vv)]_{i,j}=v_{j-i}$ for $0 \le j-i \le k-1$ and 0 otherwise. 
Usually $k \ll h$ in practice. 
As Toeplitz matrices are a general case of circulant matrices, the corresponding neural networks typically feature linear convolutional layers. Therefore, each row of $\mW_l$ is actually the shifted Toeplitz kernel with zero padding. The corresponding weight perturbation $\vu_l \sim \mathcal{N}(\mathbf{0},\sigma^2\mR_l)$ with $\mR_l \in \R^{k \times k}$. Denote by $\vw=(\vw_1,\dots,\vw_d)$ and $\vu=(\vu_1,\dots,\vu_d)$ as the concatenated weight and perturbation vectors, respectively.

Similar to the convolution kernels, the Toeplitz structure implies local, translation-invariant sensitivity.
Unlike the circulant case, here we consider the Toeplitz matrix from a spectral perspective, where the perturbation bound does not rely on a Taylor approximation. We still consider the expanded weight matrices $\mW_l = \mathrm{toep}(\vw_l)$ and its corresponding perturbation $\mU_l = \mathrm{toep}(\vu_l)$.
This corresponds to weight sharing in sensitivity, not in the network itself.
The Toeplitz sensitivity matrix enforces correlations among parameter perturbations. That is,
the nearby parameters (with respect to the indices) have correlated influence on the output, and sensitivity depends on relative offsets, rather than the absolute indices.

\subsubsection{Perturbation bound}
Let $\mT \in \R^{h^2 \times h^2}$ be a banded Toeplitz matrix with bandwidth $k$ and the generating sequence $(\{t_i\}_{i=0}^{k-1})$ up to design, and $\psi_{\min} = \min_{\omega} \Abs{\psi(\omega)} >0$ with $\psi(\omega)$ being
the spectral symbol (a.k.a. generating function) of $\mT$, i.e.,
\begin{align}
    \psi(\omega) = \sum_{i=0}^{k-1} t_i e^{-\jmath i\omega} 
\end{align}
with $\omega \in [0,2\pi]$. The eigenvalues of $\mT$ are concentrated around the samples of $\psi(\omega)$. 

Let $\Tilde{\vu}_l=\mathrm{vec}(\mU_l)= \mP \vu_l$, where $\mP \in \R^{h^2 \times k}$ is a structured sparse binary matrix, such that for each column $j$ ($j=0,1,\dots,k-1$), there is a 1 at row index $ih^2+q$ whenever $q-i=j$ with $0 \le i,q < h^2$.
Given that  $\vu_l \sim \mathcal{N}(\mathbf{0},\sigma^2\mR_l)$, it follows that $\Tilde{\vu}_l \sim \mathcal{N}(\mathbf{0}, \sigma^2\mP \mR_l \mP^T)$.
With the properties of Toeplitz matrices, we have
\begin{align} \label{eq:toeplitz-norm}
     \Norm{\mT \Tilde{\vu}_l}_2 &\ge \sigma_{\min}(\mT) \Norm{\Tilde{\vu}_l}_2 \ge \psi_{\min} \Norm{\Tilde{\vu}_l}_2 
\end{align}
where $\sigma_{\min}(\mT)$ is the smallest singular value of $\mT$, and the last inequality is due to the Szeg\"{o} limit theorem \cite{gray2006toeplitz}. 

Define an auxiliary matrix $\Tilde{\mA}_l \in \R^{h^2 \times h^2}$ be a Toeplitz matrix with
\begin{align}
        \Tilde{\mA}_l =  \frac{ e\sqrt{d}B  \prod_l \Norm{\mW_l}_2}{\psi_{\min}\Norm{\mW_l}_2}\mT, \ \forall~l,
    \end{align}
so that we have
\begin{align}
    \sum_{l=1}^d\Norm{\Tilde{\mA}_l \Tilde{\vu}_l}_2^2 &= \frac{e^2B^2d\prod_l \Norm{\mW_l}_2^2}{\psi_{\min}^2} 
    \sum_{l=1}^d \frac{\Norm{\mT \Tilde{\vu}_l}_2^2}{\Norm{\mW_l}_2^2} \\
    &\ge e^2B^2 d\prod_l \Norm{\mW_l}_2^2  \sum_{l=1}^d \frac{\Norm{\mU_l}_F^2}{\Norm{\mW_l}_2^2} \\
    &\ge e^2B^2 \prod_l \Norm{\mW_l}_2^2 \left(\sum_{l=1}^d \frac{\Norm{\mU_l}_2}{\Norm{\mW_l}_2}\right)^2 \\
    &\ge \Norm{f_{\vw+\vu}(\vx)-f_{\vw}(\vx)}_2^2
\end{align}
where the first inequality is due to \eqref{eq:toeplitz-norm} and $\Norm{\Tilde{\vu}_l}_2^2=\Norm{\mU_l}_F^2$. The rest follows similarly to the diagonal case, where the last inequality holds regardless of the structure of $\mW_l$. In fact, the Toeplitz case yields a looser perturbation bound than the diagonal case, yet its underlying weight sharing would lead to a tighter generalization bound.

Note here that we could either apply the concentration inequality to $\Norm{\Tilde{\mA}_l \Tilde{\vu}_l}_2$ with $\Tilde{\vu}_l \sim \mathcal{N}(\mathbf{0}, \sigma^2\mP \mR_l \mP^T)$ containing repeated elements in $\vu_l$ due to the Toeplitz structure, or to $\Norm{{\mA}_l {\vu}_l}_2$ with $\vu_l \sim \mathcal{N}(\mathbf{0},\sigma^2\mR_l)$ directly with a refined sensitivity matrix $\mA_l$, such that
\begin{align}
    \mA_l = \Tilde{\mA}_l \mP.
\end{align}
Since
\begin{align}
    \Tilde{\vu}_l^T \Tilde{\mA}_l^T \Tilde{\mA}_l \Tilde{\vu}_l = \vu_l^T \mA_l^T \mA_l \vu_l,
\end{align}
we have the perturbation bound holds, i.e.,
\begin{align}
    \sum_{l=1}^d\Norm{{\mA}_l {\vu}_l}_2^2 \ge \Norm{f_{\vw+\vu}(\vx)-f_{\vw}(\vx)}_{\infty}^2
\end{align}
with a $h^2 \times k$ sensitivity matrix $\mA_l$.
In doing so, the concentration inequality of $\Norm{\mA_l \vu_l}_2$ follows as in the previous cases without loss of rigor.
This demonstrates that the perturbation bounds do not actually require Jacobian domination, but energy domination in parameter space.

\subsubsection{The choice of $\sigma$ for a fixed $\hat{\beta}$}
Similarly to the diagonal case, for a fixed $\hat{\beta}$, we introduce the approximation of $\mA_l$, i.e., 
\begin{align}
        \hat{\mA}_l =  \frac{ e\sqrt{d}B \prod_l \Norm{\hat{\mW}_l}_2}{\psi_{\min}\Norm{\hat{\mW}_l}_2}\mT \mP, \ \forall~l.
\end{align}
Then it follows
\begin{align}
   \frac{1}{e^2} \mA_l \mA_l^T \preceq \hat{\mA}_l \hat{\mA}_l^T \preceq e^2 \mA_l \mA_l^T.
\end{align}
By enforcing
\begin{align}
   \sum_{l=1}^d \Norm{{\mA}_l {\vu}_l}_2^2 &\le \sigma^2 \kappa \sum_{l=1}^d \Tr({\mA}_l {\mR}_l {\mA}_l^T)\\
   &\le e^2 \sigma^2 \kappa \sum_{l=1}^d \Tr(\hat{\mA}_l \hat{\mA}_l^T)
   \le  \frac{\gamma^2}{16}
\end{align}
with ${\mR}_l \preceq \mI$, we set
\begin{align}
    \frac{1}{{\sigma}^2} &= \frac{16e^2 \kappa}{\gamma^2} \sum_{l=1}^d \Tr(\hat{\mA}_l \hat{\mA}_l^T).
\end{align}

\subsubsection{PAC-Bayesian bound for a fixed $\hat{\beta}$}
Due to weight sharing, the weights of interest in the KL term reduce to $\vw=(\vw_1,\vw_2,\dots,\vw_d)$.
Next, according to \eqref{eq:opt-Rl}, we have the optimized ${\mR}_l$, i.e.,
\begin{align} \label{eq:toeplitz-Rl}
    {\mR}_l^* = (\mI+{\eta}^2 \mA_l^T\mA_l )^{-1}
    = (\mI+{\eta}^2 {\lambda}_l^2 \mP^T \mT^T  \mT \mP)^{-1}
\end{align}
where $\eta^2=\frac{16   \kappa \Norm{\vw}_2^2}{\gamma^2}$ and 
\begin{align}
    \lambda_l = \frac{ e\sqrt{d}B  \prod_l \Norm{\mW_l}_2}{\psi_{\min}\Norm{\mW_l}_2}.
\end{align}

To make the optimized ${\mR}_l^*$ tractable, we consider the asymptotic analysis. 
Due to the Szeg\"{o} limit theorem with respect to  the eigenvalue distribution, where $g(\mT) \leftrightarrow g(\psi(\omega))$ for any continuous function $g$, in the sense that \cite{gray2006toeplitz}
\begin{align}
    \lim_{h^2 \to \infty} \frac{1}{h^2}\sum_{j=1}^{h^2} g(\sigma_j(\mT)) = \frac{1}{2\pi} \int_{0}^{2\pi} g(\psi(\omega)) \mathrm{d}\omega
\end{align}
we have, as $h^2 \to \infty$, that
\begin{align}
\Tr({\mA}_l {\mA}_l^T) &\sim \frac{k}{2\pi}\int_{0}^{2\pi} \lambda_l^2 \Abs{\psi(\omega)}^2 \mathrm{d} \omega  \\
    \Tr(\mR_l^*)  - \log \det \mR_l^* - \dim(\mR_l^*) &\sim \frac{k}{2\pi}\int_{0}^{2\pi}  \delta({\eta}{\lambda}_l \Abs{\psi(\omega)}) \mathrm{d} \omega\\
    &\le \frac{k}{2\pi}\int_{0}^{2\pi}  {\eta}^2{\lambda}_l^2 \Abs{\psi(\omega)}^2 \mathrm{d} \omega
\end{align}
with $\delta(x)=\log(1+x^2)-\frac{x^2}{1+x^2} \le x^2$ as defined before.

Through $\mR_l$, Toeplitz suppresses structured directions, keeps sensitivity only where the symbol is large, corresponding to smoothness in parameter sensitivity.
With the choice of $\sigma^2$, the KL divergence can be upper bounded by
\begin{align}
    \KL & \le \frac{8e^{2} \kappa\Norm{\vw}_2^2}{\gamma^2} \sum_{l=1}^d \Tr(\hat{\mA}_l \hat{\mA}_l^T) \nn \\
    &  + \frac{1}{2} \sum_{l=1}^d \left(\Tr(\mR_l^*)  - \log \det \mR_l^* - \dim(\mR_l^*) \right)\\
    &\lesssim \frac{8 e^4 \kappa  \Norm{\vw}_2^2}{\gamma^2 }   \sum_{l=1}^d \frac{k}{2\pi} \int_{0}^{2\pi} \lambda_l^2 \Abs{\psi(\omega)}^2 \mathrm{d} \omega \nn \\
    & \qquad \qquad \qquad + \frac{1}{2}\sum_{l=1}^d \frac{k}{2\pi} \int_{0}^{2\pi} \delta({\eta}{\lambda}_l \Abs{\psi(\omega)}) \mathrm{d} \omega\\
    &\lesssim \frac{8 (e^4+1) \kappa  \Norm{\vw}_2^2 }{\gamma^2 } \sum_{l=1}^d \frac{k}{2\pi} \int_{0}^{2\pi} \lambda_l^2 \Abs{\psi(\omega)}^2 \mathrm{d} \omega\\
    &\lesssim \gO\left(  \frac{B^2 d^2 k \beta^{2d-2}}{\psi_{\min}^2 \gamma^2} \sum_{l=1}^d \Norm{\vw_l}_2^2 \int_{0}^{2\pi} \Abs{\psi(\omega)}^2 \mathrm{d} \omega\right) \\
    &\lesssim \gO\left(\frac{B^2d^2k}{\gamma^2} \frac{\psi_{\max}^2}{\psi_{\min}^2} \prod_{i\ne l} \Norm{\vw_i}_1^2 \sum_{l=1}^d \Norm{\vw_l}_2^2\right)\\
    &=\gO\left(\frac{B^2d^2k}{\gamma^2} \Phi^{\mathrm{toep}}(\vw)\right)
\end{align}
where $\psi_{\max}=\max_{\omega}\Abs{\psi(\omega)}$, $\Norm{\vw_l}_2 \le \beta=\Norm{\mW_l}_2 \le \Norm{\vw_l}_1$ for Toeplitz matrices, and the spectral complexity becomes
\begin{align}
    \Phi^{\mathrm{toep}}(\vw)=\frac{\psi_{\max}^2}{\psi_{\min}^2} \prod_{i\ne l} \Norm{\vw_i}_1^2 \sum_{l=1}^d \Norm{\vw_l}_2^2. 
\end{align}

With the same arguments as the previous, we end up with the final generalization bound in the theorem 
\begin{align}
    L_0(f_{\vw}) \le \hat{L}_{\gamma}(f_{\vw}) + \mathcal{O} \left( \sqrt{ \frac{B^2d^2k \Phi^{\mathrm{toep}}(\vw) + \ln \frac{md}{\delta}}{\gamma^2m}} \right)
\end{align}
for any ${\beta}$. As the ``kernel'' spectral complexity $\Phi^{\mathrm{toep}}(\vw)$ is usually much smaller than that of the spectral complexity of entire weight matrices, this bound is tighter than the previous one \cite{neyshabur2018pac}, when network structures follow the Toeplitz style.

In what follows, we consider some special designs of $\mT$.

\textbf{Orthogonal circulant matrix.} Let $\mT$ be an orthogonal circulant matrix, i.e., $\mT^T \mT = \mI $, which is also Toeplitz. So, $\psi_{\max}=\psi_{\min}=1$. 
As such, the scaling factor reduces to 
\begin{align}
    \gO\left(\sqrt{\frac{B^2d^2k}{\gamma^2} \prod_{i\ne l} \Norm{\vw_i}_1^2 \sum_{l=1}^d \Norm{\vw_l}_2^2}\right)
\end{align}
which is comparable with that in Theorem \ref{thm:circulant-matrix}, because $\Norm{\vw_l}_2 \le \Norm{\mV^H \vw_l}_{\infty} \le \Norm{\vw_l}_1$, yet is tighter than others.

\textbf{Geometric kernel with exponential decay.}
Let $\mT$ be a banded symmetric Toeplitz matrix with exponential decay, i.e., $t_i=\rho^{\Abs{i}}$, with $\rho \in (0,1)$.
The exponentially decaying entries encode smooth attenuation of perturbation effects, mimicking spatial or channel locality. The design avoids unnecessary coupling of unrelated coordinates (unlike full matrices) while still being richer than diagonal.

In this case, we have
\begin{align}
    \psi(\omega)=\frac{1-\rho^2}{1-2\rho \cos\omega + \rho^2}
\end{align}
which yields
\begin{align}
    \frac{\psi_{\max}}{\psi_{\min}} = \frac{1+\rho}{1-\rho}.
\end{align}
The diagonal case is a special Toeplitz when $\rho=0$, where the spectral symbol is flat, and all frequencies are penalized equally. The symbol $\psi(\omega)$ is a frequency filter, where the low-frequency components are penalized more. Thus, the resulting scaling factor with $\rho=0$ reduces to 
\begin{align}
    \gO\left(\sqrt{\frac{B^2d^2k}{\gamma^2} \prod_{i\ne l} \Norm{\vw_i}_1^2 \sum_{l=1}^d \Norm{\vw_l}_2^2}\right)
\end{align}
which is strictly tighter than those in Theorem \ref{thm:diag-matrix} and in \cite{neyshabur2018pac}.

% \section*{Acknowledgments}

\bibliographystyle{IEEEtran}
% Generated by IEEEtran.bst, version: 1.14 (2015/08/26)

\end{document}